\PassOptionsToPackage{table}{xcolor}
\pdfoutput=1
\documentclass[twoside]{article}

\usepackage[accepted]{aistats2026_arxiv}

\usepackage[round]{natbib}

\usepackage{booktabs}
\usepackage{enumitem}
\usepackage{mathtools}
\usepackage{amsmath,amsfonts,amssymb,amsthm, bm}
\usepackage{thmtools}
\usepackage{url}
\usepackage{graphicx}
\usepackage{float}
\usepackage{wrapfig}
\usepackage{subcaption}
\usepackage{xfrac}
\usepackage{xcolor}
\usepackage{pifont}
\usepackage{tikz}
\usetikzlibrary{positioning,calc,arrows.meta}%
\definecolor{mydarkblue}{rgb}{0,0.08,0.45}
\usepackage[colorlinks,citecolor=mydarkblue,urlcolor=mydarkblue,linkcolor=mydarkblue,filecolor=mydarkblue]{hyperref}
\usepackage{cleveref}
\usepackage[makeroom]{cancel}

\usepackage{siunitx}
\newcommand{\mathbold}[1]{\bm{#1}}
\newcommand{\mbf}[1]{\mathbf{#1}}

\newcommand{\T}{\top}%
\newcommand{\R}{\mathbb{R}}%

\DeclareMathOperator*{\argmin}{arg\,min}
\DeclareMathOperator*{\argmax}{arg\,max}

\newcommand{\norm}[1]{\left\lVert#1\right\rVert}

\newcommand{\vtheta}[0]{\mathbold{\theta}}

\newcommand{\MPhi}[0]{\mathbold{\Phi}}

\renewcommand{\mid}[0]{\,|\,}

\newcommand{\vb}{\mbf{b}}

\newcommand{\vg}{\mbf{g}}

\newcommand{\vk}{\mbf{k}}

\newcommand{\vr}{\mbf{r}}

\newcommand{\vx}{\mbf{x}}

\newcommand{\vz}{\mbf{z}}
\newcommand{\MA}{\mbf{A}}
\newcommand{\MB}{\mbf{B}}
\newcommand{\MC}{\mbf{C}}

\newcommand{\MH}{\mbf{H}}
\newcommand{\MI}{\mbf{I}}

\newcommand{\MK}{\mbf{K}}

\newcommand{\MM}{\mbf{M}}

\newcommand{\MS}{\mbf{S}}

\newcommand{\bigo}{\mathcal{O}}

\newcommand{\rnd}[1]{\left(#1\right)}
\newcommand{\sqr}[1]{\left[#1\right]}

{}
\newcommand{\data}{\mathcal{D}}

\newcommand{\vphi}{\boldsymbol{\phi}}

\newcommand{\entropy}{\mathcal{H}}
\newcommand{\activeset}{\mathcal{A}}

\usepackage{xspace}
\newcommand{\eg}{\textit{e.g.}\@\xspace}
\newcommand{\ie}{\textit{i.e.}\@\xspace}

\crefname{section}{Sec.}{Sec.}
\crefname{lemma}{Lemma}{Lemmas}
\crefname{thm}{Thm.}{Theorem}
\crefname{appendix}{App.}{Appendices}
\crefname{algorithm}{Alg.}{Algorithms}
\crefname{equation}{Eq.}{Eqs.}
\crefname{figure}{Fig.}{Figs.}
\crefname{table}{Table}{Table}

\newcommand\blfootnote[1]{%
  \begingroup
  \renewcommand\thefootnote{}\footnote{#1}%
  \addtocounter{footnote}{-1}%
  \endgroup
}

\begin{document}

\runningauthor{Dharmesh Tailor, Nicolò Felicioni, Kamil Ciosek}

\twocolumn[

\aistatstitle{A Bayesian Information-Theoretic Approach to Data Attribution}

\aistatsauthor{ Dharmesh Tailor$^{*}$ \And Nicolò Felicioni \And  Kamil Ciosek }

\aistatsaddress{ University of Amsterdam \And  Spotify \And Spotify } ]

\begin{abstract}
Training Data Attribution (TDA) seeks to trace model predictions back to influential training examples, enhancing interpretability and safety. We formulate TDA as a Bayesian information-theoretic problem: subsets are scored by the \emph{information loss} they induce—the entropy increase at a query when removed. This criterion credits examples for resolving predictive uncertainty rather than label noise. To scale to modern networks, we approximate information loss using a Gaussian Process surrogate built from tangent features. We show this aligns with classical influence scores for single-example attribution while promoting diversity for subsets. For even larger-scale retrieval, we relax to an information-gain objective and add a variance correction for scalable attribution in vector databases. Experiments show competitive performance on counterfactual sensitivity, ground-truth retrieval and coreset selection, showing that our method scales to modern architectures while bridging principled measures with practice.
\end{abstract}

\begin{figure*}[t]
    \centering
    \begin{minipage}[t]{0.40\textwidth}
        \centering
        \begingroup
\renewcommand{\arraystretch}{1.1}

\def\cmark{\ding{51}}%
\def\xmark{\ding{55}}%
\def\head{\small\bfseries\sffamily}%
\def\yes{\color{mydarkblue}}%
\def\no{\color{black!65}}%
\def\warn{\color{orange!85!black}}%

\begin{tikzpicture}[
    font=\scriptsize\sffamily,
    >=Latex,
    box/.style={
        draw,
        rounded corners=3pt,
        thick,
        align=left,
        inner xsep=4pt,
        inner ysep=3pt,
        text width=5.35cm,
        fill=black!2
    },
    pill/.style={
        rounded corners=2pt,
        inner sep=3pt,
        font=\scriptsize\sffamily,
        align=left
    },
    approxArrow/.style={->, thick, densely dashed, black!65},
    regimeLabel/.style={pill, text=black!75}
]

\node[box] (IL) {
    {\head Information Loss}\\[1pt]
    \begin{tabular}{@{}l l@{}}
    \textbf{Submodular?} &
        {\no \textcolor{black!65}{\xmark\ \textbf{No}}} \\
    \textbf{Vector query?} &
        {\no \textcolor{black!65}{\xmark\ \textbf{No}} (depends on $D\!\setminus\!S$)} \\
    \end{tabular}
};

\node[box, below=6mm of IL] (IG) {
    {\head Information Gain}\\[1pt]
    \begin{tabular}{@{}l l@{}}
    \textbf{Submodular?} &
        {\yes \textcolor{mydarkblue}{\cmark\ \textbf{Yes}} (greedy $(1-1/e)$)} \\
    \textbf{Vector query?} &
        {\no \textcolor{black!65}{\xmark\ \textbf{No}} (candidate-dep.\ denom.)} \\
    \end{tabular}
};

\node[box, below=6mm of IG] (ApproxIG) {
    {\head Approx.\ Information Gain}\\[-1pt]
    {\scriptsize (linear-response variance correction)}\\[1pt]
    \begin{tabular}{@{}l l@{}}
    \textbf{Submodular?} &
        {\warn \textbf{No guarantee} (approx.)} \\
    \textbf{Vector query?} &
        {\yes \textcolor{mydarkblue}{\cmark\ \textbf{Yes}} (squared inner prod.)} \\
    \end{tabular}
};

\draw[approxArrow] (IL.south) -- (IG.north);
\path (IL.south) -- (IG.north) node[regimeLabel, midway, right=1mm] {
    high-noise regime\\[-1pt]
    (IG $\approx$ IL)
};

\draw[approxArrow] (IG.south) -- (ApproxIG.north);
\path (IG.south) -- (ApproxIG.north) node[regimeLabel, midway, right=1mm] {
    linear-response\\[-1pt]
    (Approx.\ IG $\approx$ IG)
};

\end{tikzpicture}
\endgroup
    \end{minipage}%
    \hspace{0.01\textwidth}%
    \begin{minipage}[t]{0.58\textwidth}
        \centering
        \input{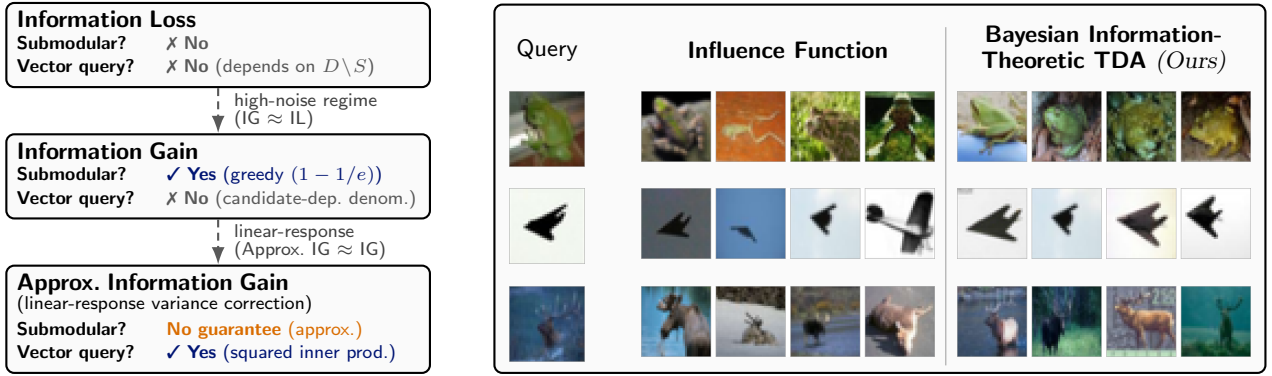}
    \end{minipage}
    \caption{Bayesian information-theoretic training data attribution (TDA). For a query input $\vx_*$, we quantify the contribution of a training subset $S$ via its \emph{information loss}: the increase in (posterior) predictive entropy at $\vx_*$ when $S$ is withheld from training, attributing credit to examples that resolve \emph{epistemic} uncertainty rather than label noise. Left: we relate information loss to a submodular \emph{information gain} relaxation whose leading-order term matches information loss in a high-noise regime, followed by a linear-response variance correction that implements greedy selection with a squared inner-product score, enabling efficient retrieval in a vector database. Right: CIFAR-10 (ResNet-9) examples showing the top-ranked training images under an influence-function estimator versus our information loss criterion.}
    \label{fig:teaser}
\end{figure*}

\section{INTRODUCTION}
The problem of training data attribution (TDA) is to trace a model's prediction back to its training data, thereby addressing a fundamental need for interpretability and transparency in modern machine learning systems. In contrast to approaches that seek to understand a model's internal workings in a \emph{mechanistic} fashion, TDA is a form of explanation that is grounded in data. Attributed training examples can be used to provide explanations for recommendation systems, enable risk mitigation strategies that address safety, privacy, and fairness concerns, and facilitate appropriate compensation frameworks for creative work while detecting potential plagiarism or memorization.

The de-facto technique for TDA is influence function---a classical tool from robust statistics \citep{hampel1974influence} and regression diagnostics \citep{cook1980characterizations} revived for modern machine learning by \citet{koh2017understanding}. 
By linearizing the stationarity condition at the trained parameters, influence function provides a local, first-order estimate of the counterfactual change in a model's loss or prediction when a training point is infinitesimally upweighted or removed, thereby sidestepping costly retraining.%
This convenience comes with caveats: the derivation depends on fully-converged parameters which is not a realistic assumption in practice; and requires the computation and inversion of large curvature matrices which is impractical for modern architectures.
A substantial body of work has since tackled some of these concerns by proposing alternate influence function estimators that relax such restrictions or exploiting numerical approximations for computational efficiency. 
This includes approaches that backtrack through training trajectories \citep{hara2019data}; gradient-accumulation methods like TraceIn \citep{pruthi2020estimating}; expressive curvature surrogates such as EK-FAC \citep{grosse2023studying}; leveraging random projections or improved linear system solvers \citep{schioppa2022scaling}.
\blfootnote{ $^{*}$ Part of this work was carried out when Dharmesh Tailor was on internship at Spotify.}

In this work, we replace the loss-based counterfactual notion of TDA with an explicitly information-theoretic one: for a test query we score a subset by the \emph{information} lost about the latent prediction when that subset is withheld---\ie, the increase in entropy at the query (measured in nats). This choice targets epistemic uncertainty attributable to missing training data, and places our formulation in a broader tradition of Bayesian and information-theoretic approaches to learning. Information measures have long been used to formalize principled learning objectives and guarantees---\eg, information gain as an acquisition rule in Bayesian experimental design and active learning \citep{mackay1992information,houlsby2011bayesian}, information-theoretic analyses of generalization via mutual information \citep{xu2017information,steinke2020reasoning}, and the information bottleneck view of representation learning and training dynamics \citep{shwartz2017opening}. Closer in spirit, Bayesian viewpoints have studied TDA under training stochasticity, treating attribution scores as random variables rather than fixed point estimates \citep{nguyen2023bayesian}, and data point selection through posterior inference over instance-wise weights \citep{xu2024bayesian}. These perspectives are complementary to our query-specific information-theoretic formulation of subset attribution.

\paragraph{Contributions.}
\textit{(i)} We pose TDA as \emph{information loss} at a query, the entropy increase induced by withholding a subset. This can be implemented by instantiating a Gaussian Process surrogate from tangent features of the network (the empirical NTK), leading to a closed-form expression. 
\textit{(ii)} We show that this criterion admits a principled \emph{relaxation to information gain}: for fixed subset size and large observation noise, information loss and information gain share the same leading-order expansion. The score for a subset then depends only on the selected points themselves, not on the remaining training data.
\textit{(iii)} To make our algorithm amenable to fast lookup, we derive a \emph{linear-response variance correction} that turns each step into a squared inner-product search between a residual query vector and (precomputed) sketched Jacobians. All inversions are performed in a low-dimensional sketch space, and we demonstrate how selection can be implemented via approximate nearest-neighbour search over a vector database (querying with both the residual and its negation, then merging top candidates), giving a computational footprint comparable to modern influence-based pipelines. 
\textit{(iv)} Empirically, we evaluate three complementary settings: \emph{leave-subset-out brittleness}—counterfactual sensitivity of predictions when top-ranked subsets are removed; \emph{backdoor attribution}, where our method retrieves the ground-truth poisoned subset in controlled attacks; and \emph{coreset selection}, where we test whether attributed subsets preserve downstream accuracy when used for retraining.

\section{BACKGROUND}\label{sec:background}
\paragraph{Setup.}
We consider supervised learning with training data $\data \!=\! \{(\vx_i,y_i)\}_{i=1}^N$, covering tasks from image classification to next-token prediction in language modeling. Let $f_{\vtheta}$ denote a model with parameters $\vtheta \in \R^P$ that is trained on $\data$ (\eg by empirical risk minimization). 
Training data attribution (TDA) seeks to quantify the contribution of a subset $S \subset \data$ to the model's behaviour at a designated test query $\vx_*$ (optionally with label $y_*$). We write this contribution as an \emph{attribution score} $\mathcal{I}(S;\vx_*)$, intentionally agnostic about the precise notion of ``behaviour''—it may refer to loss, prediction, or another task-relevant functional of $f_{\vtheta}$.

\paragraph{A counterfactual ideal and its relaxations.}
Let $\hat{\vtheta}$ be the parameters obtained by training on $\data$ and $\hat{\vtheta}^{\setminus S}$ the parameters obtained when the subset $S$ is removed and the model is retrained under the same procedure. 
A natural ``ground-truth'' attribution for a loss-based notion of behaviour is the counterfactual change in test loss.
Denoting $\vz_* = (\vx_*,y_*)$, we have,
\begin{equation}
   \ell_*(\vtheta) := \ell(\vtheta;\vz_*), 
   \quad
   \mathcal{I}(S;\vx_*) \triangleq \ell_*(\hat{\vtheta}^{\setminus S}) - \ell_*(\hat{\vtheta}),
   \label{eq-inf}
\end{equation}
which measures how the example-wise loss at $\vx_*$ would differ if $S$ were not available for training. This choice requires access to $y_*$; when labels are unavailable one can replace the loss with a prediction-based quantity, \eg comparing model outputs $f_{\hat{\vtheta}^{\setminus S}}(\vx_*)$ and $f_{\hat{\vtheta}}(\vx_*)$ directly. Regardless of the choice of attribution score, computing \eqref{eq-inf} is expensive: identifying the most influential subset of size $M$,
\begin{equation}
   \argmax_{S \subset \data:\, |S|=M} \mathcal{I}(S;\vx_*),
\end{equation}
is combinatorial and intractable in general (except for the singleton case $M{=}1$). 
To avoid the cost, many practical TDA methods—most notably those based on first-order influence functions—adopt \emph{additive} group scores.
In this case, the subset attribution decomposes into pointwise terms and the size-$M$ maximizer is obtained by simple top-$M$ selection:
\begin{equation}
   \mathcal{I}_{\mathrm{IF}}(S;\vx_*) = \sum_{\vz_i \in S} \mathcal{I}_{\mathrm{IF}}(\{\vz_i\};\vx_*),
\end{equation}
where $\vz_i=(\vx_i,y_i)$.
This rank-and-pick rule is computationally attractive—pointwise scores can be precomputed once per test query and reused for any $M$—but it discards pairwise and higher-order interactions between training examples. The resulting redundancy (\eg over-counting near-duplicates) has motivated extensions that account for interactions, such as higher-order influence functions \citep{basu2020second}. 
We return to this limitation when introducing our information-theoretic criteria in the next section.

\section{INFORMATION-THEORETIC DATA ATTRIBUTION}
Training Data Attribution (TDA) is fundamentally about quantifying and allocating information: which subsets of data best contribute to resolving uncertainty. The most principled toolkit for measuring information is information theory, which provides a mathematically rigorous framework for quantifying uncertainty and information in nats. A common critique of such an approach is its perceived lack of tractability, since information-theoretic objectives often involve intractable conditioning on large datasets. However, as we will show, the approximations developed later in this section reduce the implementation to efficient primitives--queries in a vector database and linear algebra over small matrices--bringing the framework into practical reach. With this motivation in place, we now introduce the information-theoretic framework underlying our approach, beginning with an ideal criterion and then developing its practical relaxation.

We propose to score a subset $S \subset \data$ at a test query $\vx_*$ by the \emph{information loss} (InfoLoss),
\begin{equation}\label{eq:info_loss}
   \mathcal{I}_{\mathrm{IL}}(S;\vx_*)
   =
   -\entropy\!\big[f_* \mid \vx_*, \data\big]
   +
   \entropy\!\big[f_* \mid \vx_*, \data \setminus S\big],
\end{equation}
that is, the amount of information (in nats) about the latent $f_* \!:=\! f(\vx_*)$ that would be lost if $S$ were removed.
Under a Gaussian model that we consider in this work, this directly corresponds to an increase in marginal uncertainty about the test query.

\paragraph{A Bayesian surrogate.}
In common practice with pretrained networks (or networks trained using standard learning algorithms such as SGD), parameters are available as a point estimate $\hat{\vtheta}$. The mapping $\vx \mapsto f_{\hat{\vtheta}}(\vx)$ is therefore deterministic, rendering the entropy terms in \cref{eq:info_loss} degenerate. To obtain meaningful information-theoretic quantities, we use a \emph{Bayesian surrogate} that endows latent functions with uncertainty consistent with the local geometry of the trained model.

\paragraph{Neural networks as Gaussian processes.}
Our Bayesian surrogate is a Gaussian process induced by linearizing the network around $\hat{\vtheta}$:
\begin{equation}\label{eq:gp_ntk}
   k(\vx,\vx') = \nabla_{\vtheta} f_{\hat{\vtheta}}(\vx) \nabla_{\vtheta} f_{\hat{\vtheta}}(\vx')^\T
\end{equation}
where $k(\cdot,\cdot)$ -- often referred to as the (empirical) \emph{neural tangent kernel} (NTK) \citep{jacot2018neural} -- is a linear kernel with tangent features given by Jacobians of the neural network.
In contrast to the infinite-width NTK at \emph{initialization}---which converges to a deterministic kernel and remains constant during training under the NTK parameterization \citep{jacot2018neural,yang2019scaling}---evaluating the Jacobians at the \emph{trained} $\hat{\vtheta}$ ties the kernel to the representation the model actually uses. This is precisely what is justified by the linearized (Laplace) posterior around $\hat{\vtheta}$ \citep{khan2019approximate,immer2021improving} and its function-space counterpart \citep{pan2020continual}.

Since \cref{eq:info_loss} depends only on the entropy of the latent $f_*$ under the GP surrogate, it suffices to compute the marginal variance. For a Gaussian likelihood (squared-error regression) with noise variance $\sigma^2$, the standard Gaussian process regression (GPR) expressions apply \citep{williams2006gaussian}. Crucially, in this regression setting the marginal variance is \emph{independent of the targets}, so the entropy term—and therefore $\mathcal{I}_{\mathrm{IL}}$—depends only on inputs, the kernel and $\sigma^2$.

To keep the objective and its updates simple and to retain the Gaussian setting, we recast classification as regression (\eg with mean-centered one-hot targets) and use the GPR variance irrespective of the underlying likelihood. Under this surrogate, the entropy difference in \cref{eq:info_loss} reduces to a log-variance ratio.%

\paragraph{Relation to influence-based TDA.}
It turns out that in the singleton case ($|S|=1$), \cref{eq:info_loss} reduces to a form resembling a cross-influence term normalized by self-influence terms, \ie the cosine-style normalization used in \emph{RelatIF} \citep{barshan2020relatif}, especially when combined with a Gauss-Newton Hessian approximation.
In App.~\ref{app:info_loss_relation_influence}, we show RelatIF is label-independent (assuming a single-output case), which provides an explanation for its reported behaviour of selecting fewer ``globally'' influential examples that are often outliers or mislabelled examples.
However, an important difference is that RelatIF is \emph{signed}—it distinguishes positively vs.\ negatively influential examples—whereas our (singleton) score is \emph{unsigned}.

\paragraph{Relaxation to information gain.}
While the information loss criterion in \cref{eq:info_loss} is intuitively appealing, it has two critical issues. First, we have to condition on the near-complete dataset which is expensive. Second, information loss is not sub-modular and requires an intractable combinatorial optimization (we cannot expect a performance guarantee when optimising for it greedily). 
In principle, one could tackle the first problem in an analogous way to how recent linear system solvers handles Hessian-vector products (HVPs), that is replacing full-batch accumulations by sub-sampling as is done with the LiSSA scheme by \citet{agarwal2017second})\footnote{One could go even further and consider reducing the candidate space via heuristic prefiltering strategies \citep{grosse2023studying} or sparse kernel/GP techniques to obtain \emph{landmark} points (\eg leverage scores by \citet{alaoui2015fast}).}. However, the second problem is more challenging to address.

To sidestep these issues, we instead take a conceptually simpler approach and propose to use the \emph{information gain} (InfoGain) criterion. This scores a subset $S \subset \data$ at a test query $\vx_*$ by,
\begin{equation}\label{eq:info_gain}
   \mathcal{I}_{\mathrm{IG}}(S;\vx_*)
   =
   -\entropy\!\big[f_* \mid \vx_*, S\big]
   +
   \entropy\!\big[f_* \mid \vx_*\big],
\end{equation}
that is, the reduction (in nats) of the marginal uncertainty about $f_*:=f(\vx_*)$ if we trained only on $S$. Information gain is a standard acquisition criterion in Bayesian experimental design and active learning \citep{mackay1992information}; a key difference here is that we evaluate it directly at the test marginal rather than on parameters.

Crucially, for a fixed subset size and large observation noise, information loss and information gain share the same leading-order expansion. This makes InfoGain a principled high-noise relaxation of InfoLoss, formalized below.
\begin{restatable}[]{lemma}{highnoisequiv}
\label{lem:high_noise_equiv}
    Fix a query $\vx_*$ and subset size $M$, and let
    $\mathcal{S}_M := \{S \subseteq \data : |S| = M\}$.
    Then, as $\sigma^2 \to \infty$,
    \[
    \begin{aligned}
    \mathcal{I}_{\mathrm{IG}}(S;\vx_*)
    &=
    \frac{\norm{\vk_{S*}}^2}{2 k_{**}\sigma^2}
    +
    \bigo(\sigma^{-4}), \\
    \mathcal{I}_{\mathrm{IL}}(S;\vx_*)
    &=
    \frac{\norm{\vk_{S*}}^2}{2 k_{**}\sigma^2}
    +
    \bigo(\sigma^{-4}),
    \end{aligned}
    \]
    uniformly over $S \in \mathcal{S}_M$.
    In particular, if $\norm{\vk_{S*}}^2$ has a unique maximizer $S^\dagger$ over $\mathcal{S}_M$, then both criteria are maximized by $S^\dagger$ for sufficiently large $\sigma^2$.
\end{restatable}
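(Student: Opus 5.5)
The plan is to write both criteria explicitly through the GP posterior variance, and then expand that variance in powers of $\sigma^{-2}$ by a Neumann series. Assume a scalar output and $k_{**}>0$. For any index set $A\subseteq\data$, Gaussian process regression gives
\[
v_A := \operatorname{Var}[f_*\mid \vx_*, A] = k_{**} - \vk_{A*}^\T(\MK_{AA}+\sigma^2\MI)^{-1}\vk_{A*}.
\]
The differential entropy of a Gaussian is $\tfrac12\log(2\pi e\, v)$, so the $2\pi e$ factors cancel and
\[
\mathcal{I}_{\mathrm{IG}}(S;\vx_*)=\tfrac12\log\frac{k_{**}}{v_S},\qquad
\mathcal{I}_{\mathrm{IL}}(S;\vx_*)=\tfrac12\log\frac{v_{\data\setminus S}}{v_{\data}} .
\]

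The first key step is a uniform expansion of $v_A$. For $\sigma^2>\norm{\MK_{\data\data}}$ we can write
\[
(\MK_{AA}+\sigma^2\MI)^{-1}=\sigma^{-2}\rnd{\MI+\sigma^{-2}\MK_{AA}}^{-1}=\sigma^{-2}\MI-\sigma^{-4}\MK_{AA}+\dots
\]
Since $\MK_{AA}$ is a principal submatrix of the PSD matrix $\MK_{\data\data}$, we have $\norm{\MK_{AA}}\le\norm{\MK_{\data\data}}=:\lambda$. Likewise $\norm{\vk_{A*}}^2\le\norm{\vk_{\data *}}^2$.

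Bounding the Neumann tail then gives
\[
v_A=k_{**}-\sigma^{-2}\norm{\vk_{A*}}^2+R_A,\qquad |R_A|\le \frac{\norm{\vk_{\data*}}^2\lambda}{\sigma^2(\sigma^2-\lambda)}.
\]
The constant here is independent of $A$, so the remainder is $\bigo(\sigma^{-4})$ uniformly over all subsets. This uniformity is the only delicate point, and the submatrix norm bound settles it. Uniformity is in any case automatic for finite $\data$, since $\mathcal{S}_M$ is finite.

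Next I would plug the expansion into the logarithms, using $\log(1+u)=u+\bigo(u^2)$ with $u=\bigo(\sigma^{-2})$ uniformly.

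For InfoGain,
\[
\tfrac12\log\frac{k_{**}}{v_S}=-\tfrac12\log\rnd{1-\frac{\norm{\vk_{S*}}^2}{k_{**}\sigma^2}+\bigo(\sigma^{-4})}=\frac{\norm{\vk_{S*}}^2}{2k_{**}\sigma^2}+\bigo(\sigma^{-4}).
\]

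For InfoLoss, write
\[
\log v_{\data\setminus S}-\log v_{\data}=\log\rnd{1-\frac{\norm{\vk_{\data\setminus S,*}}^2}{k_{**}\sigma^2}}-\log\rnd{1-\frac{\norm{\vk_{\data*}}^2}{k_{**}\sigma^2}}+\bigo(\sigma^{-4}).
\]
The key observation is the exact additivity $\norm{\vk_{\data*}}^2=\norm{\vk_{\data\setminus S,*}}^2+\norm{\vk_{S*}}^2$, which holds because the squared norm is a sum over indices. The leading terms therefore combine to $\norm{\vk_{S*}}^2/(k_{**}\sigma^2)$. Halving gives the same expansion as for InfoGain, and the remainders stay uniform because all quantities involved are bounded by $\data$-level constants.

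Finally, for the maximizer claim, let $\delta>0$ be the gap between $\norm{\vk_{S^\dagger *}}^2$ and the second-largest value over $\mathcal{S}_M$. This gap is positive because $\mathcal{S}_M$ is finite and the maximizer is unique. Let $C$ be the uniform remainder constant. Then for any $S\neq S^\dagger$, each criterion satisfies
\[
\mathcal{I}(S^\dagger)-\mathcal{I}(S)\ge \frac{\delta}{2k_{**}\sigma^2}-2C\sigma^{-4},
\]
which is positive once $\sigma^2>4Ck_{**}/\delta$. So both criteria are maximized by $S^\dagger$ for sufficiently large $\sigma^2$.

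I expect the main obstacle to be keeping the remainder bounds explicit and uniform, in particular through the InfoLoss logarithm, where two $\bigo(\sigma^{-2})$ terms must cancel exactly before the $\sigma^{-4}$ error can be bounded.
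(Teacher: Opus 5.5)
Your proof is correct. For the InfoGain expansion and for the maximizer argument (a gap $\delta$ against a uniform $2C\sigma^{-4}$ remainder) it is essentially the paper's proof; the genuine difference is in the InfoLoss part.

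The paper first derives an exact leave-$M$-out identity via the partitioned inverse: $v_*(\bar S)-v_*(\data)=[\MK_y^{-1}\vk_{\data*}]_S^\T\rnd{[\MK_y^{-1}]_{S,S}}^{-1}[\MK_y^{-1}\vk_{\data*}]_S$, with $\MK_y=\MK_{\data\data}+\sigma^2\MI_N$. It then expands the two factors as $\sigma^{2}\MI_M+\MK_{SS}+\bigo(\sigma^{-2})$ and $\sigma^{-2}\vk_{S*}+\bigo(\sigma^{-4})$, so the variance increase comes out directly as a single quantity of order $\sigma^{-2}$.

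You instead apply one uniform expansion, $v_A=k_{**}-\sigma^{-2}\norm{\vk_{A*}}^2+R_A$, to both $A=\data\setminus S$ and $A=\data$. The exact additivity $\norm{\vk_{\data*}}^2=\norm{\vk_{\data\setminus S,*}}^2+\norm{\vk_{S*}}^2$ then produces the leading term.

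What your route buys: it is shorter and more elementary, with no block inversion and no inversion of an expanded matrix. Your explicit bound on $R_A$, via $\norm{\MK_{AA}}\le\norm{\MK_{\data\data}}$, also gives uniformity over all subsets directly, rather than appealing to finiteness of $\mathcal{S}_M$.

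What the paper's route buys: at the cost of more algebra, it gives an exact positive-semidefinite quadratic-form representation of the variance increase, which makes $\mathcal{I}_{\mathrm{IL}}\ge 0$ manifest. The same identity is also reused for the greedy InfoLoss rule and the RelatIF comparison.

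The obstacle you anticipated does not really arise. Each logarithm is expanded separately with a uniform $\bigo(\sigma^{-4})$ error, and the leading $\sigma^{-2}$ terms combine exactly by additivity, so no delicate cancellation of remainders is needed.
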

For proof, see App.~\ref{app:info_criteria_obs_noise}. The lemma justifies using InfoGain as a high-noise surrogate for InfoLoss: we no longer need to condition on the whole dataset, and the new objective is sub-modular, allowing for the standard $1 - \frac{1}{e}$ guarantee for greedy algorithms \citep{nemhauser1978analysis}.

\paragraph{Greedy algorithm for information gain.}
Justified by its sub-modularity, we now optimize \cref{eq:info_gain} under the GP surrogate by greedy selection. 
Let $\activeset$ denote the set of acquired points (initially $\emptyset$). 
At each of the $M$ rounds we pick the example whose inclusion most reduces the marginal variance at $\vx_*$.
Using the formula for inverse of a partitioned matrix, we can express this with $\activeset$ as conditioning set throughout:
\begin{align}
   \vx^{(m)} &\;\leftarrow\; \argmin_{\vx \in \data \setminus \activeset} v_{*}^{\activeset \cup \{\vx\}} \label{eq:info_gain_greedy}
\end{align}
This is equivalent (see App.~\ref{app:greedy_info} for a full derivation) to
\begin{align}
    \vx^{(m)} &\;\leftarrow\; \argmax_{\vx \in \data \setminus \activeset} \frac{\rnd{{v_{\vx,*}^{\activeset}}}^2}{\sigma^2 + v_{\vx}^{\activeset}}, \label{eq:info_gain_greedy_rank1}
\end{align}
where the covariance terms are given by:
\begin{align}
   v_{\vx,*}^{\activeset} &= k_{\vx,*} - \vk_{\activeset, \vx}^\T \rnd{ \MK_{\activeset} + \sigma^2 \MI }^{-1} \vk_{\activeset, *}, \label{eq:cov_new_star_info_gain} \\
   v_{\vx}^{\activeset} &= k_{\vx,\vx} - \vk_{\activeset, \vx}^\T \rnd{ \MK_{\activeset} + \sigma^2 \MI }^{-1} \vk_{\activeset, \vx}. \label{eq:cov_new_new_info_gain}
\end{align}
Here $k_{\cdot,\cdot} \!=\! k(\cdot,\cdot)$ are kernel evaluations from \cref{eq:gp_ntk}, $\MK$ is the corresponding Gram matrix, and $\MI$ is the identity. Intuitively, the numerator is the squared (conditional) covariance between $f_{\vx}$ and $f_*$, and the denominator contains the marginal variance of the candidate point. After choosing $\vx^{(m)}$, we update $\activeset \leftarrow \activeset \cup \{\vx^{(m)}\}$ and iterate until $|\activeset|{=}M$.

\paragraph{Scalability via Jacobian sketches.}
To apply the greedy rule, we must repeatedly evaluate the covariance terms, which involves per-example Jacobians in $\R^{P}$ for the kernel evaluations and inversions whose dimension scales with $M$. 
While computing per-example Jacobians is tractable on modern accelerators, storing $O(N)$ of them is prohibitive in memory at contemporary scales (\eg pre-training corpora). We therefore adopt random-projection Jacobian sketches: draw a sketching matrix $\MA\in\R^{K\times P}$ with $K\ll P$ from a subgaussian family (\eg, Gaussian or Rademacher), and define
\begin{equation}
\begin{aligned}
   \vphi_{\vx} &= \sfrac{1}{\sqrt{K}}\MA\nabla_{\vtheta} f_{\hat{\vtheta}}(\vx)^\T \\
   &\text{so that} \quad k(\vx,\vx') \approx \vphi_{\vx}^{\!\top}\vphi_{\vx'} ,
\end{aligned}
\end{equation}
which (by Johnson–Lindenstrauss) preserves pairwise inner products with high probability. Substituting these features into the ``weight-space'' form (via Woodbury formula) yields the scalable approximations to \cref{eq:cov_new_star_info_gain,eq:cov_new_new_info_gain},
\begin{equation}\label{eq:cov_il_ws}
   v_{\vx,*}^{\activeset} \approx \vphi_{\vx}^{\!\top}\MS_{\activeset}^{-1}\vphi_{*},
   \quad
   v_{\vx}^{\activeset} \approx \vphi_{\vx}^{\!\top}\MS_{\activeset}^{-1}\vphi_{\vx},
\end{equation}
where
\begin{equation}
   \MS_{\activeset} := \tfrac{1}{\sigma^{2}}\sum_{\vx'\in \activeset}\vphi_{\vx'}\vphi_{\vx'}^{\!\top} + \MI.
\end{equation}
so all dot-products and inversions are only in a $K$-dimensional space with $K\!\ll\!P$. 
Here $\MS_{\activeset}$ is the posterior precision matrix of a Bayesian linear regressor built from the projected Jacobians $\vphi_{\vx}$ on $\activeset$; consequently, removing a single example in subsequent greedy steps corresponds to a rank-one update, for which Sherman-Morrison formula yields efficient $O(K^2)$ maintenance of $\MS_{\activeset}^{-1}$.
This addresses the memory consideration (no need to retain full $P$-dimensional Jacobians) and maintains the reasonable compute (cheap solves in the sketch space). 
Practically, the projected Jacobians $\vphi_{\vx}$ can be obtained without materializing full Jacobians, an approach advocated in recent scalable TDA pipelines \citep{choe2024your,schioppa2024efficient}.

\paragraph{Vector database.}
Even under the InfoGain relaxation, each greedy step in \cref{eq:info_gain_greedy_rank1} still requires scanning all candidates in $\data \setminus \activeset$.
Approximate nearest-neighbour (ANN) indices in high-performance vector databases (e.g., FAISS \citep{johnson2019billion}) can substantially reduce this cost by turning the search into vector similarity queries.
This perspective has recently been drawn on to scale TDA to LLM regimes \citep{sun2025enhancing,liu-etal-2025-olmotrace}.
However, such system optimizations only apply to primitives that reduce to vector similarity search.
While a full database implementation is beyond our scope, we use this as a design constraint.
In \cref{eq:info_gain_greedy_rank1}, the numerator $\smash{\bigl(v_{\vx,*}^{\activeset}\bigr)^2}$ is amenable to this setting,\footnote{FAISS \citep{johnson2019billion} supports an absolute inner-product metric for some indexes, which preserves the ranking induced by a squared inner product. Otherwise, a squared inner product can be emulated via two \emph{vanilla} inner-product queries.} whereas the denominator $\smash{v_{\vx}^{\activeset}}$ is a candidate-specific quadratic form and is not directly expressible as a standard vector-search primitive.

\begin{figure}[t]
   \centering
   \begin{subfigure}[b]{\columnwidth}
      \centering
      \includegraphics[width=\columnwidth]{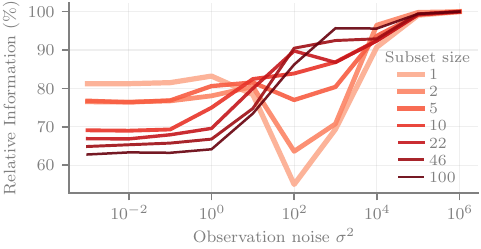}
      \caption{Information efficiency vs. observation noise}
      \label{fig:info_efficiency_obsnoise}
   \end{subfigure}
   
   \vspace{1em}
   
   \begin{subfigure}[b]{\columnwidth}
      \centering
      \includegraphics[width=\columnwidth]{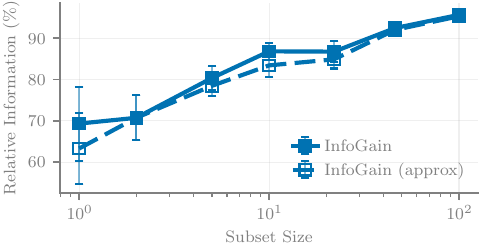}
      \caption{Information efficiency vs. subset size}
      \label{fig:info_efficiency_subsetsize}
   \end{subfigure}
   \caption{
   Information efficiency experiments on binary CIFAR-10.
   Top: selecting subsets via greedy InfoGain and varying the observation noise. Bottom: fixed observation noise level ($\sigma^2=10^3$), using InfoGain and approximate InfoGain.}
   \label{fig:info_efficiency}
\end{figure}

\paragraph{Linear-response variance correction.}
In order to have an algorithm that can be implemented as queries to a vector database, we now derive a further approximation (see App.~\ref{app:linear_repsonse} for further details). Specifically, we derive a first-order (linear-response) perturbation of the test marginal variance in \cref{eq:info_gain_greedy} by introducing a scalar weight on the candidate point and differentiating with respect to it, in the spirit of \citet{giordano2018covariances} (cf. \citealp{opper2003variational,welling2004linear}):
\begin{equation}
v_*^{\activeset \cup \{\vx\}} \approx v_*^{\activeset} - \tfrac{1}{\sigma^{2}} \bigl(v_{\vx,*}^{\activeset}\bigr)^2.
\end{equation}
Substituting this approximation into \cref{eq:info_gain_greedy} removes the candidate-dependent denominator and yields the following greedy rule for the \emph{variance-corrected} information gain,
\begin{equation}
   \vx^{(*)} \;\leftarrow\; \argmax_{\vx \in \data \setminus \activeset} \bigl(\vphi_{\vx}^{\!\top}\vr_{*}^{\activeset}\bigr)^2,
   \label{eq-obj-ig-approx}
\end{equation}
where
\[
   \vr_{*}^{\activeset} = \MS_{\activeset}^{-1}\,\vphi_{*} = \vphi_* \;-\; \MPhi_{\activeset}^{\!\top}\!\bigl(\MPhi_{\activeset}\MPhi_{\activeset}^{\!\top} + \sigma^2 \MI\bigr)^{-1}\!\MPhi_{\activeset}\,\vphi_* .
\]
Here $\MPhi_{\activeset}$ stacks the projected Jacobians for $\activeset$ as an $(M\times K)$ matrix.
The second line shows that $\vr_{*}^{\activeset}$ is a ``residual’’ query obtained by removing from $\vphi_*$ the energy explained by the acquired set (a kernel-ridge projection).
Operationally, each greedy step reduces to a squared inner-product search over the fixed database $\{\vphi_{\vx}\}_{\vx\in\data\setminus\activeset}$ with the single evolving query $\vr_*^{\activeset}$—exactly the primitive supported by vector databases.

\paragraph{Approximation Quality.}
To evaluate how well InfoGain and approximate InfoGain with the linear-response correction track InfoLoss, we introduce a metric which we call relative information (Figure \ref{fig:info_efficiency}). Relative information tracks the fraction of information (measured in nats) recovered by approximations, relative to the information loss criterion (approximated by greedy selection for tractability purposes). We evaluate this on binary CIFAR-10 (class 0 vs. class 1) trained on a 3-layer MLP. Varying the observation noise (top row of Figure \ref{fig:info_efficiency}), InfoGain approaches InfoLoss across subset sizes, converging near 100\% with $\sigma^2 \geq 10^4$. This is consistent with the high-noise asymptotic equivalence in Lemma \ref{lem:high_noise_equiv}. At a fixed noise level ($\sigma^2 = 10^3$, bottom row of Figure \ref{fig:info_efficiency}), the approximate InfoGain closely tracks exact InfoGain across subset budgets, indicating that the variance-correction introduces minimal degradation while enabling vector-database-friendly selection.

\paragraph{Computational footprint.}
This approximate InfoGain pipeline requires one extra pass over the training set to compute a $K$-dimensional projected Jacobian for each of the $N$ training examples, which has the same order of cost as one training epoch and is comparable to influence-based pipelines that likewise rely on a preprocessing pass to compute projected features or approximate curvature factors. Storing these features costs $O(NK)$ memory. For a query and subset budget $M$, maintaining the sketch-space precision matrix and residual via rank-one updates costs $O(MK^2)$ and is independent of $N$; the only remaining $N$-dependent operation is candidate retrieval. In the most naive implementation this retrieval is $O(NK)$ per greedy step, whereas ANN indices can avoid full scans and make the search sublinear in $N$.

\paragraph{Summary of methods and practical guidance.} We have developed three complementary instantiations of our information-theoretic criterion, which differ in how they balance faithfulness to the leave-subset-out counterfactual objective, greedy optimization guarantees and scalability. Greedy InfoLoss most directly matches the counterfactual notion of entropy increase when subsets are withheld, but it is not submodular and therefore lacks greedy optimization guarantees. Greedy InfoGain is a principled high-noise relaxation whose leading-order term matches InfoLoss, while enabling submodular optimization with the standard $(1-1/e)$ guarantee. Finally, approximate greedy InfoGain adds the linear-response variance correction, reducing each step to a squared inner-product query against a vector database. In practice, this makes approximate greedy InfoGain approx the default choice: \cref{fig:info_efficiency} shows that it closely tracks exact greedy InfoGain, while being the only variant that maps directly to standard vector-database primitives and thus cleanly scales to very large candidate pools. Exact greedy InfoGain is preferable when the candidate set is moderate and one wants the exact submodular objective, whereas InfoLoss is most appropriate when staying as close as possible to the leave-subset-out counterfactual definition matters more than greedy optimization guarantees or retrieval efficiency.

\section{EXPERIMENTS}\label{sec:experiments}
We evaluate our Bayesian information-theoretic attribution methods---greedy\footnote{While information loss is not sub-modular (unlike information gain), we can still optimize it greedily.} \emph{InfoLoss}, greedy \emph{InfoGain}, and approximate greedy \emph{InfoGain} (implementable with a vector database)---on three complementary tasks: (i) \emph{leave-subset-out brittleness}, which probes how well an attribution method identifies training examples whose removal most degrades performance; (ii) \emph{retrieving ground-truth attribution via a synthetic backdoor}, which provides a controlled setting with known influential training instances; and (iii) \emph{coreset selection}, which asks whether attributed subsets suffice for accurate retraining.
All experiments operate in a \emph{final-model-only} regime \citep{wei2024final} where only the final checkpoint is available and methods must avoid retraining the baseline model for scoring (for instance as done in \citet{choe2024your}). We treat the observation noise $\sigma^2$ in our attribution methods as a hyperparameter and tune this on a held-out split using a log-scaled grid from $10^{-6}$ to $10^6$ (see App.~\ref{app:exp-details} for full details).

\begin{figure*}[t]
    \centering
    \includegraphics[width=\textwidth]{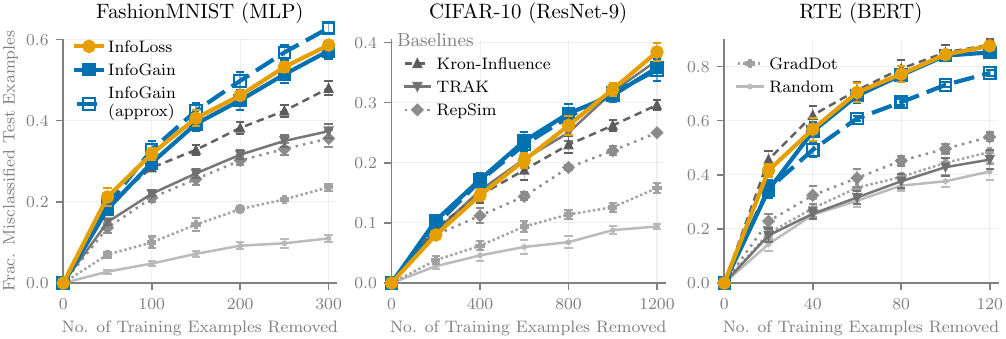}
    \caption{Our Bayesian information-theoretic methods---\textsc{InfoLoss}, \textsc{InfoGain}, and \textsc{InfoGain (approx)}---recover strong attribution signal in identified subsets, as reflected by higher brittleness. We plot the fraction of previously correct test queries that become misclassified after removing the \emph{same-label} training examples attributed by each method and retraining. Left: Fashion-MNIST (MLP), where our methods dominate and \textsc{KronInfluence} is the closest baseline. Middle: CIFAR-10 (ResNet-9), where our methods lead across budgets with \textsc{TRAK} most competitive. Right: RTE (BERT), where \textsc{KronInfluence} is strongest at smaller budgets, while \textsc{InfoLoss}/\textsc{InfoGain} catch up and are competitive at larger budgets. Error bars represent standard error across repeated runs.
    }
    \label{fig:exp_brittleness}
 \end{figure*}

\paragraph{Baselines.}
We compare against standard TDA baselines used in prior work: \textsc{Random}, representational similarity (\textsc{RepSim} / cosine similarity on penultimate-layer features) \citep{hanawaevaluation}, gradient dot-product (\textsc{GradDot} / \ie TracIn with final checkpoint only) \citep{pruthi2020estimating}, \textsc{TRAK} \citep{park2023trak}, and \textsc{KronInfluence} (influence function with EK-FAC approximation) \citep{grosse2023studying}\footnote{For the brittleness task, influence-based methods directly target loss increases upon removal, whereas our methods target information-theoretic criteria that are only indirectly related to accuracy. We nevertheless include this evaluation because it has become standard in the attribution literature.}.

\paragraph{Implementation.}
We avoid explicit multi-output Jacobians by working with a scalar measurement function.
For brittleness and backdoor retrieval, we compute training-example Jacobians with respect to the logit corresponding to the query's ground-truth class (rather than Jacobians of all logits).
For the coreset experiment, where attribution is defined with respect to a set of queries, we instead use the same multiclass reduction as \textsc{TRAK}.
All Jacobians are projected using a Rademacher sketch, shared across projection-based methods (ours and \textsc{TRAK}) within each dataset; we use projection dimension $4096$ for CIFAR-10/Fashion-MNIST and $20480$ for RTE.
We reuse the efficient CUDA Jacobian/sketch implementation from \citet{park2023trak}.
Given the sketched features, our methods solve a Bayesian linear regression in closed form; all variance computations use a numerically stable Cholesky factorization.
For the vision models in brittleness/backdoor (CIFAR-10 and Fashion-MNIST), before computing Jacobians we convert the trained network to the NTK parameterization \citep{jacot2018neural}, which preserves predictions but introduces width-dependent scaling of Jacobians; we apply this reparameterization \emph{post hoc} solely for attribution (we do not train in NTK parameterization). For the CIFAR-10 coreset experiment, we keep the original parameterization.
For BERT on RTE, we do not apply NTK parameterization and restrict Jacobians to linear layers.

\subsection{Leave-subset-out brittleness}\label{sec:brittleness}
\paragraph{Setup.}
We follow the subset-removal counterfactual protocol first introduced in \citet{singla2023simple} but the experimental setup is adapted from \citet{bae2024training}.
We evaluate on CIFAR-10 and Fashion-MNIST (the latter subsampled to 25\% of the original training set) using a ResNet-9 and MLP respectively, and on GLUE-RTE using BERT (see App.~\ref{app:exp-details} for further details).
In contrast to \citet{bae2024training}, the candidate pool for subset selection is restricted to \emph{same-class} training examples as the query, following \citet{singla2023simple}, to mitigate the bias of certain methods towards attribution in the same-class \citep{hanawaevaluation}.

\paragraph{Evaluation protocol.}
Here we describe the details of our evaluation protocol:
\begin{enumerate}[label = (\arabic*)]
  \item Train a base model on the full training set.
  \item From the test split, identify correctly classified examples as queries ($100$ for CIFAR-10/Fashion-MNIST and $50$ for RTE).
  \item For each query: 
  \begin{enumerate}
    \item Perform attribution according to each method (for a fixed subset size).
      \item Remove the attributed subset and retrain from scratch (performed for the number of query points).
      \item Predict on the query point and flag if misclassification occurs. 
  \end{enumerate}

  \item Finally we report the proportion of all query points that are misclassified. This is repeated for a linearly-spaced grid of subset sizes (if a query point becomes misclassified for a given subset size then this is terminated early).
\end{enumerate}

\begin{figure}[t]
    \centering
    \includegraphics[width=\columnwidth]{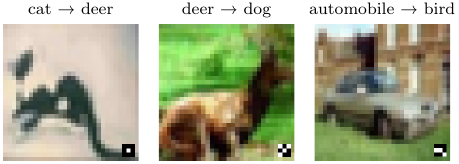}
    \caption{Examples of backdoored CIFAR-10 images with triggers. The trigger is a $3{\times}3$ black/white random pattern placed at the bottom-right corner. This causes the model to misclassify the image to the corrupted class.}
    \label{fig:backdoor_examples}
\end{figure}

\paragraph{Results.}
In \cref{fig:exp_brittleness}, we observe that across all three datasets our Bayesian information-theoretic methods perform strongly on the brittleness metric. On Fashion-MNIST (left), \textsc{InfoLoss}, \textsc{InfoGain}, and \textsc{InfoGain (approx)} all induce substantially more counterfactual errors than the baselines at every removal budget, with the gap widening as the subset size grows; \textsc{InfoGain (approx)} is typically the top curve and closely tracks (or slightly exceeds) the exact \textsc{InfoGain}. Among baselines, \textsc{KronInfluence} is the most competitive on Fashion-MNIST. On CIFAR-10 (middle), our methods again lead throughout; \textsc{InfoGain} is strongest at small--mid budgets while \textsc{InfoLoss} edges out others at the largest budget. Here, however, \textsc{TRAK} performs very close to our methods, with curves that mostly overlap. On RTE with BERT (right), the strongest baseline (\textsc{KronInfluence}) is highly competitive and tends to lead at smaller budgets, while \textsc{InfoLoss} and \textsc{InfoGain} close the gap and are competitive at larger budgets; \textsc{InfoGain (approx)} remains weaker than its exact counterpart but still outperforms the other baselines.
Despite influence-derived approaches (\ie \textsc{TRAK} and \textsc{KronInfluence}) being tailored to loss increases, our information-centric objectives remain competitive and are often superior, indicating that the information captured by our scores translates into substantial counterfactual sensitivity under subset removal (although the same-class candidate restriction may also amplify this effect).

\subsection{Retrieving ground-truth attribution via backdoor}\label{sec:backdoor}

While brittleness measures whether the subsets identified by an attribution method matter counterfactually, it does not tell us whether the retrieved examples are truly the ones driving the prediction.
To test that more directly, we next turn to a synthetic backdoor setting where the relevant training instances are known by construction.

\begin{table}[t]
    \centering
    \small
    \setlength{\tabcolsep}{2.5pt}
   \begin{tabular}{
     l
     S[table-format=1.3] c >{\scriptsize}S[table-format=1.3]
     S[table-format=1.3] c >{\scriptsize}S[table-format=1.3]
   }
   \toprule
  \textbf{Method} &
  \multicolumn{3}{c}{\textbf{Recall@50 ($\uparrow$)}} &
  \multicolumn{3}{c}{\textbf{MRR ($\uparrow$)}} \\
  \cmidrule{1-7}
   Random        & 0.011 & $\pm$ & 0.003 & 0.045 & $\pm$ & 0.016 \\
   GradDot       & 0.010 & $\pm$ & 0.002 & 0.217 & $\pm$ & 0.033 \\
   RepSim        & \underline{0.985} & $\pm$ & 0.004 & \textbf{\underline{1.000}} & $\pm$ & 0.000 \\
   TRAK          & 0.009 & $\pm$ & 0.002 & 0.189 & $\pm$ & 0.036 \\
   KronInfluence & 0.058 & $\pm$ & 0.011 & 0.504 & $\pm$ & 0.034 \\
   \addlinespace[2pt]
   \rowcolor{gray!10} InfoLoss          & \textbf{\underline{1.000}} & $\pm$ & 0.000 & \textbf{\underline{0.999}} & $\pm$ & 0.001 \\
   \rowcolor{gray!10} InfoGain          & 0.974 & $\pm$ & 0.007 & \textbf{\underline{0.998}} & $\pm$ & 0.001 \\
   \rowcolor{gray!10} InfoGain (approx) & 0.974 & $\pm$ & 0.007 & \textbf{\underline{0.998}} & $\pm$ & 0.001 \\
    \bottomrule
    \end{tabular}
    \caption{Our Bayesian information-theoretic method \textsc{InfoLoss} retrieves the ground-truth backdoor examples almost perfectly on CIFAR-10—achieving the highest recall and matching the best baseline on MRR (no significant difference). 
    Moreover, \textsc{InfoGain} and \textsc{InfoGain (approx)} show no significant difference from the best methods on MRR, and their recall is close to the best baseline.
    We compare standard TDA baselines to our proposed methods (shaded). We denote the best within each block (Baselines / Proposed) with underline and bold denotes the best overall. Reported metrics are accompanied by standard error from repeated runs.}
    \label{tab:exp_backdoor}
 \end{table}

\paragraph{Backdoor design.}
We use CIFAR-10 with a class-conditional (cyclic) BadNets \citep{gu2019badnets} rule: for base class $k$, a backdoored image is relabelled to $(k{+}1)\bmod 10$.
The backdoor trigger is a $3{\times}3$ black/white randomly-generated grid placed at the bottom-right corner, that is unique to each class (see \cref{fig:backdoor_examples}).
This is applied to 1\% of the training set, distributed evenly across classes (50 per class).
With this configuration, the attack success rate on the poisoned test set is near-perfect whilst the clean test accuracy is unchanged.

\paragraph{Retrieval task and metrics.}
For each backdoored test query from base class $k$, the ground-truth attribution set are those poisoned training examples whose base class is $k$.
We report the standard IR metrics, \emph{recall@50} and mean reciprocal rank (\emph{MRR}), averaged over $100$ queries (see App.~\ref{app:exp-details} for further details).
All other aspects of the experimental setup are identical to the brittleness experiment on CIFAR-10.

\paragraph{Results.} In \cref{tab:exp_backdoor}, \textsc{InfoLoss} attains near-perfect retrieval and yields significantly higher recall than all baselines in this synthetic backdoor-retrieval task. For ranking quality (MRR), the top methods—\textsc{RepSim}, \textsc{InfoLoss}, and both \textsc{InfoGain} variants—are statistically indistinguishable. The \textsc{InfoGain} and \textsc{InfoGain (approx)} methods also deliver performance close to the strongest baseline while maintaining MRR on par with the best methods. In contrast, influence-based TDA baselines struggle to retrieve the true influential training points, underscoring the advantage of our Bayesian information-theoretic approach for this retrieval setting.

\begin{figure}[t]
    \centering
    \includegraphics[width=\linewidth]{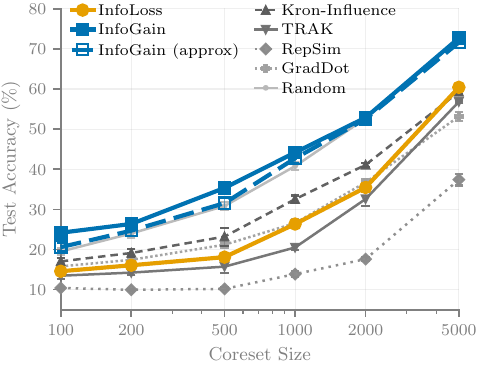}
    \caption{\textsc{InfoGain} and \textsc{InfoGain (approx)} construct substantially better CIFAR-10 coresets than existing TDA baselines which perform worse than random selection. We plot test accuracy after retraining on the selected subset for coreset sizes ranging from $0.2\%$ to $10\%$ of the training set. Error bars show standard error across repeated runs.}
    \label{fig:coreset_cifar10}
\end{figure}

\subsection{Coreset selection}\label{sec:coreset}

We next consider coreset selection through the lens of TDA: identifying a small training subset that still supports accurate retraining.

\paragraph{Setup.}
We consider CIFAR-10 with ResNet-9 and coresets ranging from $0.2\%$ to $10\%$ of the training set.
We extract $500$ examples from the test set as the query set for attribution, and report test accuracy on the remaining held-out test examples after retraining on the selected coreset.
For the TDA baselines (\textsc{TRAK}, \textsc{KronInfluence}, \textsc{GradDot}, and \textsc{RepSim}), which do not explicitly define attribution to multiple query points at once, we average the per-query attribution scores before selecting the top-$M$ examples. This is consistent with the usual additivity assumption in influence-based TDA and with the linear datamodelling score used to evaluate such methods. App.~\ref{app:coreset-details} gives the full protocol and derives the tractable multi-query greedy selection criterion used by our methods.

\paragraph{Results.}
\Cref{fig:coreset_cifar10} shows that \textsc{InfoGain} is the strongest method across essentially the entire budget range, with the clearest gains at small coresets.
At $M=500$ (1\% of the training set), \textsc{InfoGain} reaches $35.3\%$ accuracy, compared with $30.8\%$ for \textsc{Random}, $23.2\%$ for \textsc{KronInfluence}, and $15.7\%$ for \textsc{TRAK}; at $M=1000$, the gap over \textsc{Random} remains substantial ($44.2\%$ vs.\ $40.8\%$).
\textsc{InfoGain (approx)} closely tracks the exact greedy variant, especially at larger budgets.
In contrast, \textsc{InfoLoss} is markedly weaker on this task, which is consistent with its leave-subset-out objective being tailored to identifying removals that maximally disrupt a prediction rather than subsets that are jointly informative for many queries.

The TDA baselines all underperform \textsc{Random}, indicating that influence-based rankings do not cleanly transfer to coreset construction.
At the same time, a comparison with class-balanced variants (see \cref{app:exp-details}) shows that much of this degradation is due to imbalance in the selected subsets: enforcing per-class quotas substantially improves all TDA baselines and often lifts the strongest ones above greedy \textsc{InfoLoss}.
Nevertheless, even after this correction they still remain below both \textsc{InfoGain} variants.

\section{CONCLUSION}
We proposed a Bayesian information-theoretic approach to training data attribution (TDA), framing attribution as the information loss incurred when subsets of training data are withheld. Using Gaussian process surrogates from neural tangent features, we derived closed-form estimators and scalable relaxations to information gain, enabling efficient retrieval with Jacobian sketches and vector database queries. Empirical results on brittleness, backdoor retrieval and coreset selection show that our methods reliably identify influential subsets, recover ground-truth poisoned data and construct small training sets that preserve downstream accuracy, while maintaining computational efficiency comparable to influence-based baselines. This work bridges principled information measures with scalable attribution, offering a foundation for future applications in auditing, safety, and interpretability.

\subsubsection*{Acknowledgements}
DT would like to thank Alice Wang and the other members of the Simplex Lab at Spotify. DT also acknowledges Emtiyaz Khan and Eric Nalisnick for discussions around sensitivity-based variance estimates.

\bibliography{main}
\bibliographystyle{plainnat}

\clearpage
\appendix
\thispagestyle{empty}

\onecolumn
\aistatstitle{A Bayesian Information-Theoretic Approach to Data Attribution \\
-- Supplementary Materials --}

\section{HIGH-NOISE ASYMPTOTIC EQUIVALENCE BETWEEN INFORMATION LOSS AND INFORMATION GAIN}\label{app:info_criteria_obs_noise}
The goal of this section is to prove the following Lemma:
\highnoisequiv*
\textit{Proof.} Fix the subset size $M$ and define $\mathcal{S}_M := \{S \subseteq \data : |S| = M\}$. For notational convenience, write $a(S) := \norm{\vk_{S*}}^2$, use $v_*(\cdot)$ for the marginal variance of $f_* = f(\vx_*)$ as a function of the conditioning set, and define $\bar{S} = \data \setminus S$ as the complement of the training subset. Then we can write the criteria as,
\begin{align}
   \mathcal{I}_{\mathrm{IG}}(S;\vx_*)
        &= -\frac{1}{2} \log \frac{v_*(S)}{k_{**}},
        \label{eq:crit1} \\
   \mathcal{I}_{\mathrm{IL}}(S;\vx_*)
        &= \frac{1}{2} \log \frac{v_*(\bar{S})}{v_*(\data)}.
        \label{eq:crit2}
\end{align}
where we consider the marginal variances $v_*(S)$ and $v_*(\bar{S})$ under an underlying GP model without prescribing the kernel further.
Using the Neumann series $(\MI + \MA)^{-1} = \sum_{k=0}^{\infty} (-1)^k \MA^k$ for which convergence is assured when $\rho(\MA) < 1$ where $\rho(\cdot)$ is the spectral radius, we can show:
\begin{align}
    \rnd{\MK + \sigma^2 \MI}^{-1} &= \sigma^{-2} \rnd{\MI + \sigma^{-2} \MK}^{-1} \\
    &= \sigma^{-2} \rnd{ \MI - \sigma^{-2} \MK + \sigma^{-4} \MK^2 + \ldots } \\
    &= \sigma^{-2} \MI + \bigo(\sigma^{-4}) \label{eq:neumann}
\end{align}
where the remainder is entrywise of order $\sigma^{-4}$ and the expansion is valid whenever $\rho(\sigma^{-2}\MK) < 1$, which holds for sufficiently large $\sigma^2$. Because $\mathcal{S}_M$ is finite, all $\bigo(\sigma^{-4})$ terms below can be taken uniformly over $S \in \mathcal{S}_M$.
Using \cref{eq:neumann}, we can write the marginal variance $v_*(S)$ as,
\begin{align}
    v_*(S) &= k_{**} - \vk_{S*}^\T \rnd{\MK_{S} + \sigma^2 \MI_M}^{-1} \vk_{S*} \\
    &= k_{**} - \sigma^{-2} a(S) + \bigo(\sigma^{-4}). \label{eq:var_s_simplified}
\end{align}
Hence,
\begin{align}
    \mathcal{I}_{\mathrm{IG}}(S;\vx_*)
    &= -\frac{1}{2} \log \rnd{1 - \frac{a(S)}{k_{**}\sigma^2} + \bigo(\sigma^{-4})} \\
    &= \frac{a(S)}{2k_{**}\sigma^2} + \bigo(\sigma^{-4}), \label{eq:ig_asymptotic}
\end{align}
where the second line uses the Taylor expansion $\log(1+u) = u + \bigo(u^2)$ together with $u = \bigo(\sigma^{-2})$.
For $v_*(\bar{S})$, we first turn to the formula for the leave-$M$-out marginal variance (see derivation in \cref{sec:derivation_press}),
\begin{equation}\label{eq:press}
    v_*(\bar{S}) = v_*(\data) + [ \MK_y^{-1} \vk_{\data*}]_S^\T \rnd{[\MK_y^{-1}]_{S,S}}^{-1} [ \MK_y^{-1} \vk_{\data*} ]_S
\end{equation}
where we denote $\MK_y := \MK_{\data\data} + \sigma^2 \MI_N$ and overload $S$ as an index in order to extract entries corresponding to rows/columns of $S$ in $\data$.
Again using \cref{eq:neumann}, we can expand the two factors on the right-hand side of \cref{eq:press}. First,
\begin{align}
    [\MK_y^{-1}]_{S,S}
    &= \sigma^{-2} \MI_M - \sigma^{-4} \MK_{SS} + \bigo(\sigma^{-6}) \\
    \implies \rnd{[\MK_y^{-1}]_{S,S}}^{-1}
    &= \sigma^{2} \rnd{\MI_M - \sigma^{-2} \MK_{SS} + \bigo(\sigma^{-4})}^{-1} \\
    &= \sigma^{2} \MI_M + \MK_{SS} + \bigo(\sigma^{-2}). \label{eq:MKy_inv_S_S_inv}
\end{align}
and second,
\begin{align}
    [ \MK_y^{-1} \vk_{\data*}]_S
    &= \sigma^{-2} \sqr{\MI_N - \sigma^{-2} \MK_{\data\data} + \bigo(\sigma^{-4})}_{S,:} \vk_{\data*} \\
    &= \sigma^{-2} \rnd{\vk_{S*} - \sigma^{-2} \MK_{S\data} \vk_{\data*} + \bigo(\sigma^{-4})} \\
    &= \sigma^{-2} \vk_{S*} + \bigo(\sigma^{-4}). \label{eq:MKy_inv_vk_T*_S}
\end{align}
Substituting \cref{eq:MKy_inv_S_S_inv,eq:MKy_inv_vk_T*_S} into \cref{eq:press} we obtain,
\begin{align}
    v_*(\bar{S}) - v_*(\data)
    &= \rnd{\sigma^{-2} \vk_{S*} + \bigo(\sigma^{-4})}^\T
    \rnd{\sigma^{2} \MI_M + \MK_{SS} + \bigo(\sigma^{-2})}
    \rnd{\sigma^{-2} \vk_{S*} + \bigo(\sigma^{-4})} \\
    &= \sigma^{-2} \vk_{S*}^\T \vk_{S*} + \bigo(\sigma^{-4}) \\
    &= \sigma^{-2} a(S) + \bigo(\sigma^{-4}). \label{eq:var_s_bar_simplified}
\end{align}
Moreover, applying \cref{eq:var_s_simplified} with $S = \data$ yields
\begin{equation}
    v_*(\data) = k_{**} + \bigo(\sigma^{-2}). \label{eq:v_full_asymptotic}
\end{equation}
Therefore,
\begin{align}
    \mathcal{I}_{\mathrm{IL}}(S;\vx_*)
    &= \frac{1}{2} \log \rnd{1 + \frac{v_*(\bar{S}) - v_*(\data)}{v_*(\data)}} \\
    &= \frac{1}{2} \log \rnd{1 + \frac{a(S)}{k_{**}\sigma^2} + \bigo(\sigma^{-4})} \\
    &= \frac{a(S)}{2k_{**}\sigma^2} + \bigo(\sigma^{-4}), \label{eq:il_asymptotic}
\end{align}
where in the second line we used \cref{eq:var_s_bar_simplified,eq:v_full_asymptotic}.
\cref{eq:ig_asymptotic,eq:il_asymptotic} prove the shared leading-order expansion.

For the final claim, suppose $a(S)$ has a unique maximizer $S^\dagger$ over $\mathcal{S}_M$ and define the gap
\begin{equation}
    \Delta := a(S^\dagger) - \max_{S \in \mathcal{S}_M,\; S \neq S^\dagger} a(S) > 0.
\end{equation}
Since $k_{**} > 0$, by uniformity of \cref{eq:ig_asymptotic,eq:il_asymptotic} there exist constants $C > 0$ and $\sigma_{\mathrm{asym}}^2 > 0$ such that for both criteria $\mathcal{J} \in \{\mathcal{I}_{\mathrm{IG}}, \mathcal{I}_{\mathrm{IL}}\}$ we may write,
\begin{equation}
    \mathcal{J}(S;\vx_*)
    =
    \frac{a(S)}{2k_{**}\sigma^2}
    +
    r_{\mathcal{J},\sigma}(S),
    \qquad
    \left| r_{\mathcal{J},\sigma}(S) \right| \leq C \sigma^{-4}
\end{equation}
for all $S \in \mathcal{S}_M$ and $\sigma^2 \geq \sigma_{\mathrm{asym}}^2$.
Now fix any $S \neq S^\dagger$.
Then for either criterion $\mathcal{J}$,
\begin{align}
    \mathcal{J}(S^\dagger;\vx_*) - \mathcal{J}(S;\vx_*)
    &= \frac{a(S^\dagger) - a(S)}{2k_{**}\sigma^2}
    + r_{\mathcal{J},\sigma}(S^\dagger)
    - r_{\mathcal{J},\sigma}(S) \\
    &\geq \frac{a(S^\dagger) - a(S)}{2k_{**}\sigma^2} - 2C\sigma^{-4} \\
    &\geq \frac{\Delta}{2k_{**}\sigma^2} - 2C\sigma^{-4}
    \label{eq:high_noise_gap_bound}
\end{align}
The lower bound in \cref{eq:high_noise_gap_bound} is positive whenever
\begin{equation}
    \frac{\Delta}{2k_{**}\sigma^2} - 2C\sigma^{-4} > 0
    \qquad \Longleftrightarrow \qquad
    \sigma^2 > \frac{4Ck_{**}}{\Delta}.
\end{equation}
Therefore, if
\begin{equation}
    \sigma^2 > \sigma_{\mathrm{crit}}^2
    :=
    \max\left\{
        \sigma_{\mathrm{asym}}^2,\,
        \frac{4Ck_{**}}{\Delta}
    \right\},
\end{equation}
then $\mathcal{J}(S^\dagger;\vx_*) - \mathcal{J}(S;\vx_*) > 0$ for every $S \neq S^\dagger$ and either criterion $\mathcal{J}$.
Hence both $\mathcal{I}_{\mathrm{IG}}(\cdot;\vx_*)$ and $\mathcal{I}_{\mathrm{IL}}(\cdot;\vx_*)$ are uniquely maximized by $S^\dagger$ for all $\sigma^2 > \sigma_{\mathrm{crit}}^2$. This proves the lemma.

\subsection{Derivation of leave-$M$-out marginal variance}\label{sec:derivation_press}
This is a generalization of the leave-one-out expression commonly used in cross-validation of GPs \citep{sundararajan1999predictive} (also see Ch.~5.4.2 in \citet{williams2006gaussian}) but in addition to leaving out multiple points, here we are concerned with evaluating variance at a test point rather than the omitted point(s).
The derivation proceeds by exploiting the partitioned matrix inversion lemma to relate the marginal variance on the full data $v_*(\data)$ to the leave-$M$-out marginal variance $v_*(\bar{S})$.
To recap the marginal variance on the full data is given by,
\begin{equation}
    v_*(\data) = k_{**} - \vk_{\data*}^\T \MK_y^{-1} \vk_{\data*}, \quad \text{where} \; \MK_y := \MK_{\data\data} + \sigma^2 \MI_N. \label{eq:var_T_full}
\end{equation}

We consider a partition of $\MK_y$ into the following block matrix with the first block corresponding to the retained points $\bar{S}$ and the second block corresponding to the omitted points $S$ (the expression for marginal variance allows for arbritrary ordering of the points so we can re-order the points as necessary):
\begin{align}
    \MK_y &= \begin{bmatrix}
        \MA & \MB \\
        \MB^\T & \MC
    \end{bmatrix}
    \quad\text{with}\quad
    \begin{aligned}
        \MA &= \MK_{\bar{S}\bar{S}} + \sigma^2 \MI_{N-M} \\
        \MB &= \MK_{\bar{S}S} \\
        \MC &= \MK_{SS} + \sigma^2 \MI_M
    \end{aligned}
\end{align}
Then using the partitioned matrix inversion lemma, we have,
\begin{equation}\label{eq:MKy_inv}
    \MK_y^{-1} = \begin{bmatrix}
        \MA^{-1} + \MA^{-1} \MB \MM \MB^\T \MA^{-1} & - \MA^{-1} \MB \MM \\
        - \MM \MB^\T \MA^{-1} & \MM
    \end{bmatrix}
\end{equation}
where $\MM^{-1} = \MC - \MB^\T \MA^{-1} \MB$ is the Schur complement of $\MA$ in $\MK_y$.
Substituting \cref{eq:MKy_inv} into the expression for $v_*(\data)$ given in \cref{eq:var_T_full} and expanding $\vk_{\data*}^\T = [ \vk_{\bar{S}*} \; \vk_{S*} ]^\T$ we have,
\begin{align}
    v_*(\data) &= k_{**} - \vk_{\bar{S}*}^\T \MA^{-1} \vk_{\bar{S}*} -
    (\vk_{S*} - \MB^\T \MA^{-1} \vk_{\bar{S}*})^\T \MM (\vk_{S*} - \MB^\T \MA^{-1} \vk_{\bar{S}*}) \\
    &= v_*(\bar{S}) - (\vk_{S*} - \MB^\T \MA^{-1} \vk_{\bar{S}*})^\T \MM (\vk_{S*} - \MB^\T \MA^{-1} \vk_{\bar{S}*}) \label{eq:var_T_quad}
\end{align}
where we noticed that the first two terms on the right-hand side of the first line are exactly $v_*(\bar{S})$.
Next, we consider the following,
\begin{align}
    &[\MK_y^{-1}]_{S,\data} \, \vk_{\data*} = \begin{bmatrix} -\MM \MB^\T \MA^{-1} & \MM \end{bmatrix} \begin{bmatrix} \vk_{\bar{S}*}\\\vk_{S*} \end{bmatrix}
    = \MM (\vk_{S*} - \MB^\T \MA^{-1} \vk_{\bar{S}*}) \\
    &\qquad \implies \vk_{S*} - \MB^\T \MA^{-1} \vk_{\bar{S}*} = \MM^{-1} [\MK_y^{-1}]_{S,\data} \, \vk_{\data*} \label{eq:MKy_inv_S_T}
\end{align}
where the expression to the left of the equality in \cref{eq:MKy_inv_S_T} is identical to that in the quadratic form in \cref{eq:var_T_quad}.
We proceed to substitute \cref{eq:MKy_inv_S_T} into \cref{eq:var_T_quad} and rearrange it in terms of $v_*(\bar{S})$:
\begin{align}
    v_*(\bar{S}) &= v_*(\data) + ([\MK_y^{-1}]_{S,\data} \, \vk_{\data*})^\T \MM^{-1} [\MK_y^{-1}]_{S,\data} \, \vk_{\data*} \\
    &= v_*(\data) + [\MK_y^{-1} \vk_{\data*}]_S^\T \rnd{[\MK_y^{-1}]_{S,S}}^{-1} [\MK_y^{-1} \vk_{\data*}]_S
\end{align}
where in the second line we used a simple manipulation to write $[\MK_y^{-1}]_{S,\data} \, \vk_{\data*}$ as $[\MK_y^{-1} \vk_{\data*}]_S$ and write $\MM$ as a subblock of $\MK_y^{-1}$ (see \cref{eq:MKy_inv}).

\section{DERIVATION OF THE GREEDY RULES UNDER GP SURROGATE}\label{app:greedy_info}
\subsection{Information Gain}\label{app:greedy_info_gain}
At each of the $M$ rounds we pick the example whose inclusion most reduces the marginal variance at $\vx_*$ conditioned on the set of acquired points so far $\activeset$ and the candidate point $\vx$.
This is given in \cref{eq:info_gain_greedy} which is repeated here for convenience:
\begin{align}
   &\vx^{(m)} \leftarrow \argmin_{\vx \in \data \setminus \activeset} v_{*}^{\activeset \cup \{\vx\}} \nonumber \\
   &\text{where} \quad v_{*}^{\activeset \cup \{\vx\}} = k_{**} - \vk_{\activeset \cup \{\vx\}, *}^\T \rnd{ \MK_{\activeset \cup \{ \vx\}} + \sigma^2 \MI }^{-1} \vk_{\activeset \cup \{\vx\}, *}. \label{eq:var_aug_star}
\end{align}
Similar to \cref{sec:derivation_press}, we exploit the partitioned matrix inversion lemma to relate the ``add-one-in'' marginal variance $v_*^{\activeset \cup \{\vx\}}$ to $v_*^{\activeset}$. We consider a partition of $\MK_{\activeset \cup \{\vx\}} + \sigma^2\MI$ into a block matrix with the first block corresponding to the set of acquired points $\activeset$ and the second block corresponding to the candidate point $\vx$:
\begin{equation}
   \MK_{\activeset \cup \{\vx\}} + \sigma^2\MI = \begin{bmatrix}
        \MA & \vb \\
        \vb^\T & c
    \end{bmatrix}
    \quad\text{with}\quad
    \begin{aligned}
        \MA &= \MK_{\activeset} + \sigma^2 \MI \\
        \vb &= \vk_{\activeset, \vx} \\
        c &= k_{\vx,\vx} + \sigma^2
    \end{aligned}
\end{equation}
Using the partitioned matrix inversion lemma, we have:
\begin{equation}\label{eq:K_aug_inv}
   \rnd{\MK_{\activeset \cup \{\vx\}} + \sigma^2\MI}^{-1} = \begin{bmatrix}
        \MA^{-1} + m \MA^{-1} \vb \vb^\T \MA^{-1} & -m\MA^{-1} \vb \\
        -m \vb^\T \MA^{-1} & m
    \end{bmatrix}
\end{equation}
where $m^{-1} = c - \vb^\T \MA^{-1} \vb$ is the Schur complement of $\MA$ in $\rnd{\MK_{\activeset \cup \{\vx\}} + \sigma^2\MI}$. Substituting \cref{eq:K_aug_inv} into \cref{eq:var_aug_star} and expanding $\vk_{\activeset \cup \{\vx\}, *}^\T = [ \vk_{\activeset, *} \; \vk_{\vx, *} ]^\T$ we have:
\begin{equation}\label{eq:var_aug_star_expanded}
   v_*^{\activeset \cup \{\vx\}} = k_{**} - \vk_{\activeset, *}^\T \MA^{-1} \vk_{\activeset, *} - m \rnd{k_{\vx,*} - \vb^\T \MA^{-1} \vk_{\activeset, *}}^2.
\end{equation}
We recognise the first two terms on the right-hand side of \cref{eq:var_aug_star_expanded} as $v_*^{\activeset}$ and simplifying we obtain:
\begin{equation}\label{eq:var_aug_full}
   v_*^{\activeset \,\cup\, \{\vx\}} \;=\; v_*^{\activeset} \,-\, \frac{\rnd{v_{\vx,*}^{\activeset}}^2}{\sigma^2 + v_{\vx}^{\activeset}}
   \quad\text{with}\quad
   \begin{aligned}
      v_{*}^{\activeset} &= k_{**} - \vk_{\activeset, *}^\T \rnd{ \MK_{\activeset} + \sigma^2 \MI }^{-1} \vk_{\activeset, *} \\
      v_{\vx,*}^{\activeset} &= k_{\vx,*} - \vk_{\activeset, \vx}^\T \rnd{ \MK_{\activeset} + \sigma^2 \MI }^{-1} \vk_{\activeset, *} \\
      v_{\vx}^{\activeset} &= k_{\vx,\vx} - \vk_{\activeset, \vx}^\T \rnd{ \MK_{\activeset} + \sigma^2 \MI }^{-1} \vk_{\activeset, \vx}.
   \end{aligned}
\end{equation}
Since $v_*^{\activeset}$ is independent of $\vx$, we can ignore it in the greedy rule thereby obtaining \cref{eq:info_gain_greedy_rank1} -- the minus sign in front of the remaining term changes the problem from a minimization to a maximization.

\subsection{Information Loss}\label{app:greedy_info_loss}
Whilst the information loss criterion is not sub-modular, it can nevertheless be optimized greedily (just without guarantees) and therefore we compare against this in our experiments. The greedy rule corresponds to choosing the candidate point whose removal maximally increases the marginal variance at $\vx_*$. For notational convenience let $\activeset^{+} = \activeset \cup \{\vx\}$ then,
\begin{align}
   &\vx^{(m)} \leftarrow \argmax_{\vx \in \data \setminus \activeset} v_{*}^{\data \setminus \activeset^{+}} \nonumber \\
   &\text{where} \quad v_{*}^{\data \setminus \activeset^{+}} = k_{**} - \vk_{\data\setminus\activeset^+, *}^\T \rnd{ \MK_{\data\setminus\activeset^+} + \sigma^2 \MI }^{-1} \vk_{\data\setminus\activeset^+, *}.
\end{align}
We can use a special case of \cref{sec:derivation_press}, that is the \emph{leave-one-out} setting, to express $v_{*}^{\data \setminus \activeset^{+}}$ with $\data\setminus\activeset$ as the conditioning set throughout. This leads to,
\begin{equation}\label{eq:var_aug_full_il}
   v_*^{\data \setminus \activeset^{+}} \;=\; v_*^{\data \setminus \activeset} \,+\, \frac{\rnd{v_{\vx,*}^{\data \setminus \activeset}}^2}{\sigma^2 - v_{\vx}^{\data \setminus \activeset}}
   \quad\text{with}\quad
   \begin{aligned}
      v_{*}^{\data \setminus \activeset} &= k_{**} - \vk_{\data\setminus\activeset, *}^\T \rnd{ \MK_{\data\setminus\activeset} + \sigma^2 \MI }^{-1} \vk_{\data\setminus\activeset, *} \\
      v_{\vx,*}^{\data \setminus \activeset} &= k_{\vx,*} - \vk_{\data\setminus\activeset, \vx}^\T \rnd{ \MK_{\data\setminus\activeset} + \sigma^2 \MI }^{-1} \vk_{\data\setminus\activeset, *} \\
      v_{\vx}^{\data \setminus \activeset} &= k_{\vx,\vx} - \vk_{\data\setminus\activeset, \vx}^\T \rnd{ \MK_{\data\setminus\activeset} + \sigma^2 \MI }^{-1} \vk_{\data\setminus\activeset, \vx}.
   \end{aligned}
\end{equation}
Thus, the greedy rule can be stated as,
\begin{equation}
   \vx^{(m)} \;\leftarrow\; \argmax_{\vx \in \data \setminus \activeset} \frac{\rnd{{v_{\vx,*}^{\data \setminus \activeset}}}^2}{\sigma^2 - v_{\vx}^{\data \setminus \activeset}}
\end{equation}

\section{DERIVATION OF LINEAR-RESPONSE VARIANCE CORRECTION}\label{app:linear_repsonse}
We consider the ``weight-space'' form (via Woodbury formula) of the marginal variance given in \cref{eq:info_gain_greedy}:
\begin{equation}
   v_*^{\activeset \cup \{\vx\}} = \vphi_*^\T \rnd{\MS_{\activeset} + \tfrac{1}{\sigma^{2}} \vphi_{\vx} \vphi_{\vx}^\T}^{-1} \vphi_*
\end{equation}
where we use $\vphi_{\vx}$ to denote the tangent features (with slight abuse of notation) and $\MS_{\activeset}$ is the posterior precision matrix.
We can construct a perturbative variant of this expression by introducing a ``weight'' $\epsilon$ on the term corresponding to the candidate point. For notational convenience let us denote $\bar{v}_* = v_*^{\activeset \cup \{\vx\}}$ then,
\begin{align}
    \bar{v}_*^\epsilon &= \vphi_*^\T \rnd{\MS_{\activeset} + \epsilon \tfrac{1}{\sigma^2} \vphi_{\vx} \vphi_{\vx}^\T}^{-1} \vphi_* \\
    &= \vphi_*^\T \rnd{ \MS_{\activeset}^{-1} - \frac{\epsilon \MS_{\activeset}^{-1} \vphi_{\vx} \vphi_{\vx}^\T \MS_{\activeset}^{-1}}{\sigma^2 + \epsilon \vphi_{\vx}^\T \MS_{\activeset}^{-1} \vphi_{\vx} }} \vphi_* \\
    &= v_*^{\activeset} - \frac{\epsilon \rnd{v_{\vx,*}^{\activeset}}^2}{\sigma^2 + \epsilon v_{\vx}^{\activeset}} \label{eq:perturbed_var}
\end{align}
where in the last line we defined $v_*^{\activeset} = \vphi_*^\T \MS_{\activeset}^{-1} \vphi_*$, $v_{\vx}^{\activeset} = \vphi_{\vx}^\T \MS_{\activeset}^{-1} \vphi_{\vx}$ and $v_{\vx,*}^{\activeset} = \vphi_{\vx}^\T \MS_{\activeset}^{-1} \vphi_{*}$.
In the second line above, we used the Sherman-Morrison formula giving a form typical of recursive least-squares or Kalman filtering.
The resulting expression can be interpreted as a weight-dependent functional that we can differentiate with respect to $\epsilon$ and evaluate at $\epsilon = 0$ giving the following sensitivity-based variance estimate,
\begin{equation}
    \frac{d \bar{v}_*^\epsilon}{d\epsilon} = -\frac{\sigma^2 \rnd{v_{\vx,*}^{\activeset}}^2}{\rnd{\sigma^2 + \epsilon v_{\vx}^{\activeset}}^2}, \qquad
    \frac{d \bar{v}_*^\epsilon}{d\epsilon} \bigg|_{\epsilon=0} = - \frac{1}{\sigma^2} \rnd{v_{\vx,*}^{\activeset}}^2
\end{equation}
Using this we can construct a first-order perturbative update to the predictive variance $\bar{v}_*^\epsilon \approx v_*^{\activeset} + d\bar{v}_*^\epsilon / d\epsilon |_{\epsilon=0} \cdot \epsilon$. Evaluating this at $\epsilon = 1$ we have,
\begin{equation}\label{eq:linear_response_variance_correction}
    \bar{v}_* = \bar{v}_*^{\epsilon=1} \approx \underbrace{v_*^{\activeset} - \frac{1}{\sigma^2} \rnd{v_{\vx,*}^{\activeset}}^2}_{\bar{v}_*^{\textrm{approx}}}
\end{equation}
This leads to the following (approximate) greedy rule,
\begin{equation}
   \vx^{(m)} \;\leftarrow\; \argmax_{\vx \in \data \setminus \activeset} \rnd{ \vphi_{\vx}^\T \MS_{\activeset}^{-1} \vphi_{*} }^2
\end{equation}

\paragraph{Alternative view of \cref{eq:linear_response_variance_correction} when in the regime of large observation noise} 
Starting from the marginal variance in \cref{eq:var_aug_full} we can show,
\begin{align}
    v_*^{\activeset \cup \{\vx\}} = v_*^{\activeset} - \frac{\rnd{v_{\vx,*}^{\activeset}}^2}{\sigma^2 + v_{\vx}^{\activeset}} &= v_*^{\activeset} - \sigma^{-2} \rnd{v_{\vx,*}^{\activeset}}^2 \rnd{ \frac{1}{1 + \sigma^{-2} v_{\vx}^{\activeset}} } \\
    &= v_*^{\activeset} - \sigma^{-2} \rnd{v_{\vx,*}^{\activeset}}^2 \rnd{ 1 - \sigma^{-2} v_{\vx}^{\activeset} + \bigo(\sigma^{-4}) } \\
    &= v_*^{\activeset} - \sigma^{-2} \rnd{v_{\vx,*}^{\activeset}}^2 + \bigo(\sigma^{-4}) \label{eq:test_var_large_noise}
\end{align}
where in the second line we used the geometric series $(1 + a)^{-1} = \sum_{k=0}^\infty (-1)^k a^k$.
Convergence is assured when $v_{\vx}^{\activeset} < \sigma^2$ where we used the fact that variance is non-negative.
Terms that are of order $\bigo(\sigma^{-4})$ or smaller are neglected similar to \cref{app:info_criteria_obs_noise}.
The resulting expression in \cref{eq:test_var_large_noise} has leading terms identical to the linear-response variance correction of \cref{eq:linear_response_variance_correction}.

\paragraph{Approximation error of the linear-response variance correction}
Let us denote $\Delta \bar{v}_*^{\textrm{exact}} = v_*^{\activeset \cup \{\vx\}} - v_*^{\activeset}$ and $\Delta \bar{v}_*^{\textrm{approx}} = \bar{v}_*^{\textrm{approx}} - v_*^{\activeset}$ then we can show the relative error is given by,
\begin{equation}
    \frac{|\Delta \bar{v}_*^{\textrm{approx}} - \Delta \bar{v}_*^{\textrm{exact}}|}{\Delta \bar{v}_*^{\textrm{exact}}} = \frac{1}{\sigma^2} v_{\vx}^{\activeset}
\end{equation}
where we dropped the absolute value since the difference is non-negative by construction.
This indicates that the quality of the approximation is determined by the marginal variance of the candidate point. When $v_{\vx}^{\activeset}$ is large then $\bar{v}_*^{\textrm{approx}}$ overestimates the actual value.

We can write the approximation error exactly as,
\begin{equation}
    v_*^{\activeset \cup \{\vx\}} - \bar{v}_*^{\textrm{approx}} = \frac{1}{\sigma^2} \rnd{v_{\vx,*}^{\activeset}}^2 \cdot \frac{v_{\vx}^{\activeset}}{\sigma^2 + v_{\vx}^{\activeset}}
\end{equation}
For the sake of exposition, we can also give this error in Lagrange's Form as follows,
\begin{equation}\label{eq:lagrange_form}
    R_2^{\epsilon=1} = \frac{1}{2} \frac{\partial^2 \bar{v}_*^{\epsilon}}{\partial \epsilon^2} \bigg|_{\epsilon=\varepsilon} \qquad \textrm{for some} \; 0 < \varepsilon < 1
\end{equation}
This second derivative is analytically given by,
\begin{equation}\label{eq:second_order_derivative}
    \frac{\partial^2 \bar{v}_*^{\epsilon}}{\partial \epsilon^2} = \frac{2 \sigma^2 \rnd{v_{\vx,*}^{\activeset}}^2 v_{\vx}^{\activeset}}{(\sigma^2 + \epsilon v_{\vx}^{\activeset})^3}
\end{equation}
Then we can bound the remainder in \cref{eq:lagrange_form} as follows,
\begin{align}
    | R_2^{\epsilon=1} | &\leq \frac{1}{2} \max_{\varepsilon \in [0,1]} \Bigg| \frac{\partial^2 \bar{v}_*^{\epsilon}}{\partial \epsilon^2} \bigg|_{\epsilon = \varepsilon} \Bigg| \\
    &= \sigma^2 \rnd{v_{\vx,*}^{\activeset}}^2 v_{\vx}^{\activeset} \underbrace{\max_{\varepsilon \in [0,1]} \sqr{ \frac{1}{(\sigma^2 + \varepsilon v_{\vx}^{\activeset})^3} }}_{= 1 / (\sigma^2)^3} \\
    &\leq \frac{1}{\sigma^4} \rnd{v_{\vx,*}^{\activeset}}^2 v_{\vx}^{\activeset}
\end{align}
where we drop the absolute value since all the terms in \cref{eq:second_order_derivative} are non-negative. In the second line, we factor out terms independent of $\varepsilon$ from the $\max$ and observe that the maximum value is attained when $\varepsilon = 0$.

\section{CONNECTIONS BETWEEN OUR INFORMATION LOSS CRITERION AND INFLUENCE-BASED TDA}\label{app:info_loss_relation_influence}
Let us consider the singleton case, \ie $S = \{(\vx,y\})$, of \cref{eq:info_loss} with the GP surrogate:
\begin{align}
   \mathcal{I}_{\mathrm{IL}}(\{(\vx,y\});\vx_*) &= \frac{1}{2} \log\rnd{ \frac{v_*^{\data \setminus \{\vx\}}}{v_*^{\data}}} 
   \shortintertext{\color{gray}\  \  $\triangleright$ substitute $v_*^{\data\setminus\{\vx\}} = v_*^{\data} + (v_{\vx,*}^{\data})^2 / (\sigma^2 - v_{\vx}^{\data})$} 
   &= \frac{1}{2} \log \rnd{ 1 + \frac{\rnd{v_{\vx,*}^{\data}}^2}{v_*^{\data}(\sigma^2 - v_{\vx}^{\data})}} \label{eq:info_loss_il_gp}
\end{align}
where in the second line we substituted the first-round score derived in \cref{eq:var_aug_full_il}.
Similar to \cref{app:linear_repsonse}, we write the variance terms in weight-space form and use $\vphi_{\vx}$ to denote tangent features, $v_*^{\data} = \vphi_*^\T \MS_{\data}^{-1} \vphi_*$, $v_{\vx,*}^{\data} = \vphi_{\vx}^\T \MS_{\data}^{-1} \vphi_*$ and $v_{\vx}^{\data} = \vphi_{\vx}^\T \MS_{\data}^{-1} \vphi_{\vx}$, where $\MS_{\data} = \frac{1}{\sigma^2} \sum_{\vx' \in \data} \vphi_{\vx'}\vphi_{\vx'}^\T + \MI$.
Now we will show that \cref{eq:info_loss_il_gp} has resemblance to a cross-influence term normalized by self-influence terms.

Let us consider (1\textsuperscript{st}-order) influence function, the canonical approach to influence-based TDA, which performs attribution by the locally approximating the counterfactual change in test loss:
\begin{equation}
   \ell_*(\hat{\vtheta}^{\setminus \vz}) - \ell_*(\hat{\vtheta}) \approx \vg_*^\T \MH_{\hat{\vtheta}}^{-1} \vg_{\vz}
\end{equation}
where we define $\vz = (\vx,y)$, denote the gradients of the per-example train and test loss by $\vg_{\vz}$ and $\vg_*$ respectively, and let $\MH_{\hat{\vtheta}}$ be the Hessian evaluated at $\hat{\vtheta}$ (assuming convergence).
Now \citet{barshan2020relatif} claims that when the dataset contains outliers or mislabelled examples, such examples are often most influential for test examples but these are poor explanatory examples for TDA. 
They reinterpret influence function from a geometric lens and propose to use cosine similarity instead to reduce the impact of ``globally'' influential examples.
This takes the form of a cross-influence term normalized by self-influence terms:
\begin{align}
   \cos(\MH_{\hat{\vtheta}}^{-\frac{1}{2}} \vg_*, \MH_{\hat{\vtheta}}^{-\frac{1}{2}} \vg_{\vz}) &= 
   \frac{\vg_*^\T \MH_{\hat{\vtheta}}^{-1} \vg_{\vz}}{\sqrt{\vg_*^\T \MH_{\hat{\vtheta}}^{-1} \vg_{*}} 
   \cdot \sqrt{\vg_{\vz}^\T \MH_{\hat{\vtheta}}^{-1} \vg_{\vz}}} 
   \shortintertext{\color{gray}\  \  $\triangleright$ expand gradients by chain rule: $\vg_* = e_* \vphi_*$ and $\vg_{\vz} = e_{\vz} \vphi_{\vx}$} 
   &= \frac{e_* \rnd{ \vphi_*^\T \MH_{\hat{\vtheta}}^{-1} \vphi_{\vz} } e_{\vz}}{\sqrt{e_* \rnd{ \vphi_*^\T \MH_{\hat{\vtheta}}^{-1} \vphi_{*}} e_*} \cdot \sqrt{e_{\vz} \rnd{ \vphi_{\vz}^\T \MH_{\hat{\vtheta}}^{-1} \vphi_{\vz}}e_{\vz}}}
   \shortintertext{\color{gray}\  \  $\triangleright$ approximate Hessian by the Gauss-Newton matrix $\MH_{\hat{\vtheta}} \approx \MS_{\data}$ and substitute variance terms} 
   &\approx \frac{ \cancel{e_*} v_{\vx,*}^{\data} \cancel{e_{\vz}}}{\cancel{e_*} \sqrt{v_*^{\data}} \cdot \cancel{e_{\vz}} \sqrt{v_{\vx}^{\data}}}
   \shortintertext{\color{gray}\  \  $\triangleright$ observe the residuals cancel}
   &= \frac{v_{\vx,*}^{\data}}{\sqrt{v_*^{\data}} \cdot \sqrt{v_{\vx}^{\data}}}
\end{align}
where $e_{\vz} = f_{\hat{\vtheta}}(\vx) - y$ and $e_{*} = f_{\hat{\vtheta}}(\vx_*) - y_*$ are the train and test residuals respectively (assuming single-output case).
The resulting expression is independent of the training label (and test label), similar to \cref{eq:info_loss_il_gp}.
An important difference however is that this score is \emph{signed} allowing one to distinguish positively-influential examples from negatively-influential examples, sometimes called ``proponents''/``opponents'' \citep{pruthi2020estimating}.
Another difference is that whilst cosine similarity downweights train examples with high (marginal) variance, our criterion does the opposite.

\section{ADDITIONAL EXPERIMENTAL DETAILS}\label{app:exp-details}
\paragraph{Datasets and models.}
We used a ResNet-9 architecture for CIFAR-10, a two-hidden-layer MLP with ReLU activations for Fashion-MNIST, and BERT for RTE.
For the vision models in brittleness/backdoor, we apply NTK reparameterization \emph{post hoc} for attribution: we do not modify the ResNet-9 default initialization and simply apply the transformation assuming standard Normal initialization; for the MLP this follows \citet{lee2020finite} where Kaiming-Normal initialization is used without any bias terms. For the CIFAR-10 coreset experiment, we instead keep the original parameterization.
For BERT on RTE, we do not apply NTK parameterization and use linear layers only for Jacobian-based attribution.
For training, the ResNet-9 was optimized using SGD with momentum (0.9), weight decay (5e-4), a batch size of 512, over 24 epochs. We employed a triangular learning rate schedule that linearly increased to a peak learning rate of 0.5 by epoch 5 and then decayed linearly to zero by the final epoch. For Fashion-MNIST, we used SGD with momentum (0.9), batch size 64, learning rate 0.03, weight decay 1e-3, and trained for 20 epochs. For RTE, we fine-tuned BERT for 3 epochs with batch size 32, learning rate $2\times 10^{-5}$, and weight decay 0.01.
The experiments were run on an internal cluster of GPUs of the following type: Tesla V100 SXM2 16 GB.

\paragraph{Baselines.} We provide additional implementation details for the TDA baseline methods adapted from \citet{bae2024training}. All baselines produce a ranking of training examples via additive pointwise scores, and attributed subsets are obtained by the rank-and-pick rule described in \cref{sec:background}.
In addition, all baselines are evaluated on the final checkpoint only, without any model retraining.
Unless otherwise stated, all reported metrics are accompanied by standard error across five random seeds.
\begin{itemize}
	\item \textsc{Random}: Selects examples uniformly at random.
	\item \textsc{RepSim}: Ranks examples by cosine similarity between penultimate-layer features of the query and training examples.
	\item \textsc{GradDot}: Ranks examples by the dot-product between query and train gradients (\ie TracIn \citep{pruthi2020estimating} at the final checkpoint only). We use standard cross-entropy loss for train gradients and the same measurement function as \textsc{TRAK} for query gradients (detailed below).
	\item \textsc{TRAK}: An extension of generalized linear model (GLM) influence \citep{pregibon1981logistic} to neural networks using random projections and the Generalized Gauss-Newton approximation. For multi-class classification (as in our setting), \textsc{TRAK} uses a measurement function that reduces multi-class outputs to a single logistic regression (see App.~E.5 in \citet{park2023trak}). We use a Rademacher sketch with projection dimension 4096 for CIFAR-10/Fashion-MNIST and 20480 for RTE (matching our methods per dataset), but do not use ensembling as originally proposed. This is implemented via the \texttt{trak} package \citep{park2023trak} with a fast CUDA extension that avoids materializing the projection matrix.
	\item \textsc{KronInfluence}: An influence function implementation using the EK-FAC approximation to the Hessian. Computation is restricted to fully-connected and convolutional layers, the layers supported by the \texttt{kroninfluence} package \citep{grosse2023studying}. We use the same measurement function as \textsc{TRAK} for query gradients and adopt the damping factor heuristic proposed in \citet{grosse2023studying}.
\end{itemize}

\paragraph{Hyperparameter tuning (observation noise).}
For each method we tune $\sigma^2$ over the log-spaced grid $\{10^{-6},\dots,10^{6}\}$ using three random repeats per candidate value.
For the brittleness experiments, we use a reduced set of 25 queries and obtain a held-out validation set by partitioning the original test set in half; we use margin change as the validation score and fix the subset size to 500 for CIFAR-10, 150 for Fashion-MNIST and 50 for RTE.
For the coreset experiment, we use the same held-out validation split and fix the coreset size to $500$.
Tuned values are reused for the full runs reported in the main text.

\paragraph{Brittleness protocol details.}\label{app:brittleness-details}
\textit{Queries.} We evaluate 100 correctly classified queries for CIFAR-10/Fashion-MNIST and 50 for RTE.
\textit{Candidate pool.} For each query we restrict training candidates to the same class as the query \citep{singla2023simple}.
\textit{Subset sizes.} We vary $M$ over a grid (CIFAR-10: $M \in \{200, 400, 600, 800, 1000, 1200\}$; Fashion-MNIST: $M \in \{50, 100, 150, 200, 250, 300\}$; RTE: $M \in \{20, 40, 60, 80, 100, 120\}$.
\textit{Retraining.} After removing the top-$M$ points, we retrain from the same initialization with identical data ordering and per-epoch shuffling (to isolate the effect of removal).

\paragraph{Backdoor protocol details.}\label{app:backdoor-details}
Let $\mathcal{Q}$ denote the set of $100$ backdoored test queries. For each query $q \in \mathcal{Q}$, let $\mathcal{G}(q)$ be its ground-truth attribution set, \ie the $50$ poisoned training examples with the same base class as $q$.
For each query $q \in \mathcal{Q}$, let $\pi_q = (i_1, i_2, \ldots)$ denote the ranked list of training examples produced by a given method, and let $\operatorname{rank}_q(i)$ denote the position of training example $i$ in $\pi_q$.
For TDA baselines this ranking is obtained by sorting training examples by descending attribution score, whereas for our greedy information-theoretic methods it is given by the order of greedy selection.
Let $\mathcal{R}_K(q) = \{i : \operatorname{rank}_q(i) \le K\}$ denote the top-$K$ retrieved set.
We report,
\begin{equation}
    \mathrm{Recall@}50
    \;=\;
    \frac{1}{|\mathcal{Q}|}
    \sum_{q \in \mathcal{Q}}
    \frac{|\mathcal{R}_{50}(q) \cap \mathcal{G}(q)|}{|\mathcal{G}(q)|},
\end{equation}
which measures the fraction of ground-truth poisoned examples recovered in the top $50$, averaged over queries.
Let $r_q^\star = \min_{i \in \mathcal{G}(q)} \operatorname{rank}_q(i)$ denote the rank of the highest-ranked ground-truth poisoned example for query $q$.
We also report the cutoff-based mean reciprocal rank,
\begin{equation}
    \mathrm{MRR@}100
    \;=\;
    \frac{1}{|\mathcal{Q}|}
    \sum_{q \in \mathcal{Q}}
    \begin{cases}
        \displaystyle\frac{1}{r_q^\star},
        & \text{if } r_q^\star \le 100, \\[0.75em]
        0, & \text{otherwise},
    \end{cases}
\end{equation}
which is the reciprocal rank of the first ground-truth poisoned example for each query, truncated to zero if no such example appears in the top $100$.

\paragraph{Coreset protocol details.}\label{app:coreset-details}
We extract $500$ examples from the test set as the query set for attribution and report test accuracy on the remaining test examples after retraining on the selected coreset.
We vary coreset sizes over $M \in \{100, 200, 500, 1000, 2000, 5000\}$.
For each seed, the coreset-trained model is retrained from the same initialization as the full-data model, using the same training procedure (optimizer, learning rate schedule, number of epochs, and data augmentation).
For this experiment we use the same scalar multiclass measurement function as \textsc{TRAK} for all Jacobian-based methods and do not apply NTK reparameterization.
We additionally report a class-balanced variant of this experiment in \cref{fig:coreset_cifar10_cls}, where per-class quotas are enforced during selection; as discussed in \cref{sec:coreset}, this substantially improves all TDA baselines, yet both \textsc{InfoGain} variants remain strongest overall.

For the TDA baselines, attribution is defined pointwise, so we aggregate over the query set by averaging the per-query scores for each training example and then taking the top-$M$ examples.
Our greedy methods jointly optimise over all $N_{\textrm{test}}$ query points at each step, so the single-query greedy rules do not directly apply. Below we derive a tractable multi-query selection criterion for \textsc{InfoGain}; the corresponding \textsc{InfoLoss} derivation is analogous and is therefore omitted.
Let $\MPhi_* \in \mathbb{R}^{N_{\textrm{test}} \times K}$ stack the query features $\{\vphi_{*q}\}_{q=1}^{N_{\textrm{test}}}$ row-wise.
The exact AOI greedy rule for \textsc{InfoGain} is then,
\begin{equation}
    \vx^{(m)}
    \;\leftarrow\;
    \argmin_{\vx \in \data \setminus \activeset}
    \log \det \rnd{
        \MPhi_*
        \MS_{\activeset \cup \{\vx\}}^{-1}
        \MPhi_*^\T
    }.
\end{equation}

The objective depends on $\vx$ through the AOI precision $\MS_{\activeset \cup \{\vx\}}$. We isolate the $\vx$-dependent terms via Sherman-Morrison and the matrix determinant lemma, then approximate the expensive query-query interaction:
\begin{align}
   \log \det \rnd{
       \MPhi_*
       \MS_{\activeset \cup \{\vx\}}^{-1}
       \MPhi_*^\T
   }
   \shortintertext{\color{gray}\  \  $\triangleright$ expand $\MS_{\activeset \cup \{\vx\}}^{-1}$ via Sherman-Morrison}
   &= \log \det \rnd{
       \MPhi_* \MS_{\activeset}^{-1} \MPhi_*^\T
       -
       \frac{
           \rnd{\MPhi_* \MS_{\activeset}^{-1} \vphi_{\vx}}
           \rnd{\MPhi_* \MS_{\activeset}^{-1} \vphi_{\vx}}^\T
       }{
           \sigma^2 + v_{\vx}^{\activeset}
       }
   }
   \shortintertext{\color{gray}\  \  $\triangleright$ matrix determinant lemma}
   &= \underbrace{\log \det \rnd{\MPhi_* \MS_{\activeset}^{-1} \MPhi_*^\T}}_{\text{const.\ in } \vx}
   +
   \log \rnd{
       1 -
       \frac{
           \rnd{\MPhi_* \MS_{\activeset}^{-1} \vphi_{\vx}}^\T
           \rnd{\MPhi_* \MS_{\activeset}^{-1} \MPhi_*^\T}^{-1}
           \rnd{\MPhi_* \MS_{\activeset}^{-1} \vphi_{\vx}}
       }{
           \sigma^2 + v_{\vx}^{\activeset}
       }
   }
   \shortintertext{\color{gray}\  \  $\triangleright$ first term is constant in $\vx$; approximate $\rnd{\MPhi_* \MS_{\activeset}^{-1} \MPhi_*^\T}^{-1} \approx \MI$ (drop query-query interactions)}
   &\approx \mathrm{const.}
   +
   \log \rnd{
       1 -
       \frac{
           \lVert \MPhi_* \MS_{\activeset}^{-1} \vphi_{\vx} \rVert_2^2
       }{
           \sigma^2 + v_{\vx}^{\activeset}
       }
   }.
\end{align}
This yields our tractable multi-query \textsc{InfoGain} criterion,
\begin{equation}
    \vx^{(m)}
    \;\leftarrow\;
    \argmax_{\vx \in \data \setminus \activeset}
    \frac{
        \lVert \MPhi_* \MS_{\activeset}^{-1} \vphi_{\vx} \rVert_2^2
    }{
        \sigma^2 + v_{\vx}^{\activeset}
    }.
\end{equation}
Applying the same linear-response variance correction as in \cref{app:linear_repsonse} removes the candidate-dependent denominator and yields the approximate variant,
\begin{equation}
    \vx^{(m)}
    \;\leftarrow\;
    \argmax_{\vx \in \data \setminus \activeset}
    \lVert \MPhi_* \MS_{\activeset}^{-1} \vphi_{\vx} \rVert_2^2.
\end{equation}

\begin{figure}[H]
    \centering
    \includegraphics[width=0.4\linewidth]{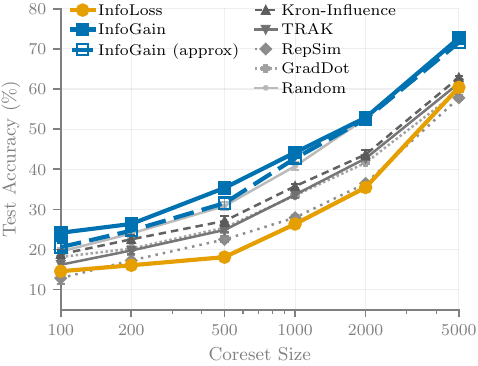}
    \caption{CIFAR-10 coreset selection with class-balanced TDA baselines. Enforcing equal per-class quotas substantially improves \textsc{TRAK}, \textsc{KronInfluence}, \textsc{GradDot} and \textsc{RepSim}, often lifting the strongest baselines above \textsc{InfoLoss}; however, both \textsc{InfoGain} variants remain strongest overall.}
    \label{fig:coreset_cifar10_cls}
\end{figure}

\clearpage
\begin{figure*}[!t]
    \section{VISUALIZATION OF TRAINING DATA ATTRIBUTION METHODS}\label{app:tda_visualization}
    \centering
    \begin{tabular}{@{}l@{}}
        \textsc{InfoLoss}\\
        \midrule
        \includegraphics{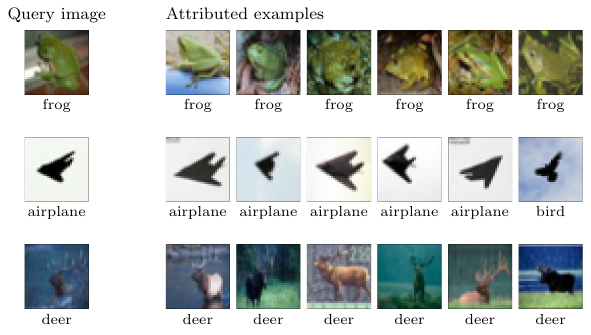}
    \end{tabular}
    
    \vspace{0.3cm}
    
    \begin{tabular}{@{}l@{}}
        \textsc{InfoGain}\\
        \midrule
        \includegraphics{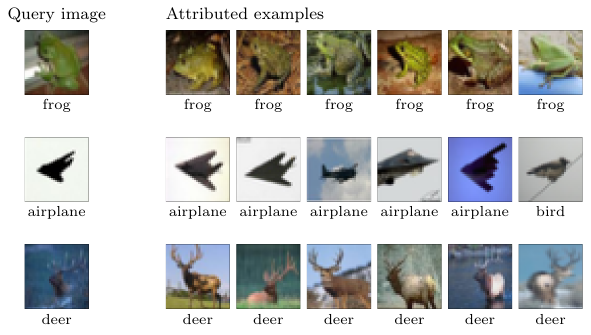}
    \end{tabular}
    
    \vspace{0.3cm}
    
    \begin{tabular}{@{}l@{}}
        \textsc{InfoGain (approx)}\\
        \midrule
        \includegraphics{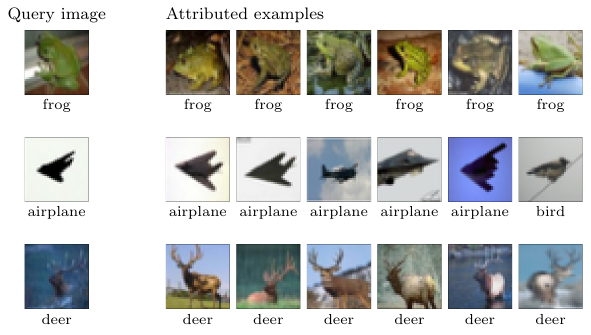}
    \end{tabular}
    \caption{Visualization of training data attribution using our Bayesian information-theoretic methods on the CIFAR-10 dataset. Top: \textsc{InfoLoss}, Middle: \textsc{InfoGain}, Bottom: \textsc{InfoGain (approx)}.}
    \label{fig:tda_methods_vis}
\end{figure*}

\begin{figure*}[!t]
    \centering
    \begin{tabular}{@{}l@{}}
        \textsc{KronInfluence}\\
        \midrule
        \includegraphics{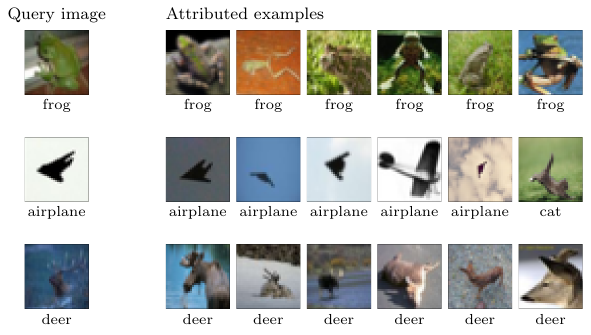}
    \end{tabular}
    
    \vspace{0.3cm}
    
    \begin{tabular}{@{}l@{}}
        \textsc{TRAK}\\
        \midrule
        \includegraphics{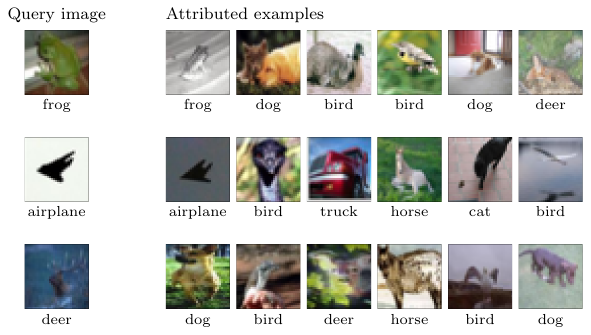}
    \end{tabular}
    
    \vspace{0.3cm}
    
    \begin{tabular}{@{}l@{}}
        \textsc{TracIn}\\
        \midrule
        \includegraphics{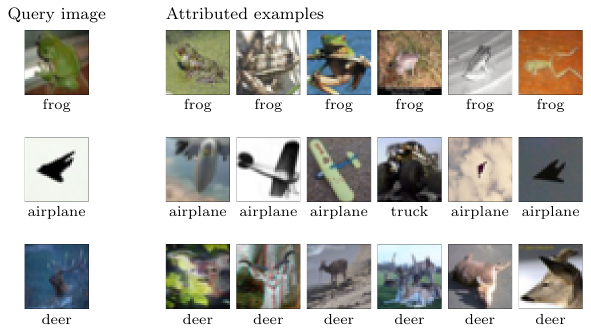}
    \end{tabular}
    \caption{Visualization of training data attribution using baseline influence-based methods on the CIFAR-10 dataset. Top: \textsc{KronInfluence}, Middle: \textsc{TRAK}, Bottom: \textsc{TracIn}.}
    \label{fig:tda_baselines_vis}
\end{figure*}

\begin{figure*}[!t]
    \centering
    \begin{tabular}{@{}l@{}}
        \textsc{RepSim}\\
        \midrule
        \includegraphics{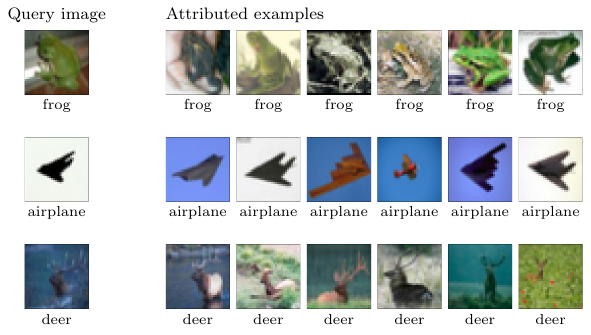}
    \end{tabular}
    \caption{Visualization of training data attribution using representational similarity (\textsc{RepSim}) on the CIFAR-10 dataset.}
    \label{fig:tda_repsim_vis}
\end{figure*}

\end{document}